%% file: iclr2026_conference.tex
\documentclass{article} 
\usepackage{iclr2026_conference,times}

\usepackage{booktabs}
\usepackage{pifont}

\usepackage{graphicx}  
\newcommand{\methodname}{\text{AgentOPSD}\xspace}
\usepackage{amsthm}
\newtheorem{proposition}{Proposition}
\usepackage{microtype}
\usepackage{algorithm}
\usepackage{algpseudocode}
\usepackage{url}
\usepackage{amsmath}
\usepackage{amssymb}
\usepackage{array}
\usepackage{xcolor}
\usepackage{colortbl}
\usepackage{xspace}
\usepackage{soul}       
\newcolumntype{C}{>{\centering\arraybackslash}p{0.038\textwidth}}
\definecolor{topcolor}{RGB}{252, 236, 196}
\definecolor{secondcolor}{RGB}{223, 235, 253}
\usepackage[most,skins,theorems]{tcolorbox}
\definecolor{darkgreen}{RGB}{0,128,0}

\newtcolorbox{templatebox}[1]{
  enhanced,
  unbreakable,
  colback=white,
  colframe=black!65,
  colbacktitle=black!80,
  coltitle=white,
  boxrule=0.9pt,
  arc=2pt,
  left=6pt,
  right=6pt,
  top=6pt,
  bottom=6pt,
  title={#1},
  fonttitle=\bfseries,
  sharp corners,
  boxed title style={sharp corners, boxrule=0pt}
}

\usepackage{wrapfig}   
\usepackage{caption}   

\usepackage{hyperref}
\definecolor{darkblue}{rgb}{0, 0, 0.5}
\hypersetup{colorlinks=true, citecolor=darkblue, linkcolor=darkblue, urlcolor=darkblue}

\usepackage{eso-pic}
\AddToShipoutPictureBG{%
  \ifnum\value{page}=1
    \AtPageUpperLeft{%
      \put(\LenToUnit{3.8cm},\LenToUnit{-1.3cm}){%
        \includegraphics[height=0.8cm]{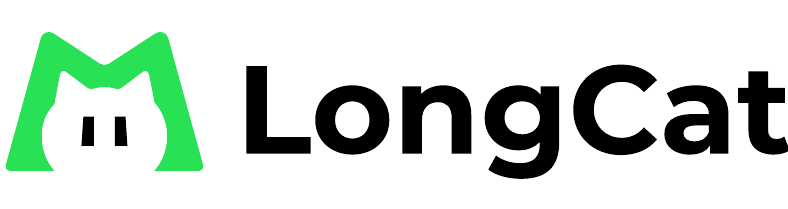}%
      }%
    }%
  \fi
}

\title{AgentOPSD: Recursive Self-Distillation for Agentic Reinforcement Learning}

\author{
\textbf{Zi-Han Wang}$^{1,3}$\thanks{Work done during internship at Meituan.},~ \textbf{Zhengxi Lu}$^{2}$,
\textbf{Zhiyuan Yao}$^{2}$,
\textbf{Jinyang Wu}$^{1}$, 
\textbf{Jie Wu}$^{1}$,
\textbf{Zhengzhou Cai}$^{3}$,\\
\textbf{Yueqing Sun}$^{3}$,
\textbf{Ziang Ye}$^{3}$,
\textbf{Linji Hao}$^{3}$,
\textbf{Qi Gu}$^{3}$\thanks{Corresponding author}~,~
\textbf{Xunliang Cai}$^{3}$, 
\textbf{Yongliang Shen}$^{2}$,
\textbf{Yujiu Yang}$^{1}$\footnotemark[2]\\[3pt]
  $^1$Tsinghua University \qquad$^2$Zhejiang University \qquad$^3$Meituan\\
  \texttt{\small zethive0225@gmail.com \qquad guqi03@meituan.com} \\
  \begin{tabular}{@{}ll@{}}
  \end{tabular}}

\iclrfinalcopy 

\begin{document}

\maketitle

\begin{abstract}
Reinforcement learning(RL) with verifiable rewards constructs trajectory-level advantage, yet often fails to credit the few pivotal decisions that drive outcomes in
long-horizon multi-turn agentic RL. Some recent works introduce privileged self-distillation into credit assignment for RL, offering denser supervision,
but it still remains unclear how such a local signal should express \emph{sequential}
credit. 
We therefore propose \textbf{\methodname{}}, a critic-free recursive turn-level credit assignment for agentic reinforcement learning. \methodname{} aggregates token-level teacher-student log-probability gaps into turn-level evidence, recursively updates a Bayesian belief state in log-odds space. This provides a principled reweighting scheme that transforms sparse outcome supervision into turn-level credit signals and identifies pivotal turns by the marginal revision between consecutive states while remaining fully compatible with standard policy optimization, requiring neither additional rollouts.
We evaluate \methodname{} on ALFWorld, WebShop, and Search-QA with
two Qwen model scales (3B and 7B). \methodname{} improves over GRPO and
strong self-distillation baselines, reaching $89.1\%$ success on ALFWorld with
Qwen2.5-7B, and ablations attribute the gains to turn-level aggregation and
history-dependent recursive belief updates.
Our code is available at \url{https://github.com/ZethWang/AgentOPSD}.
\end{abstract}



\begin{figure}[h!]
\centering
\includegraphics[width=\textwidth]{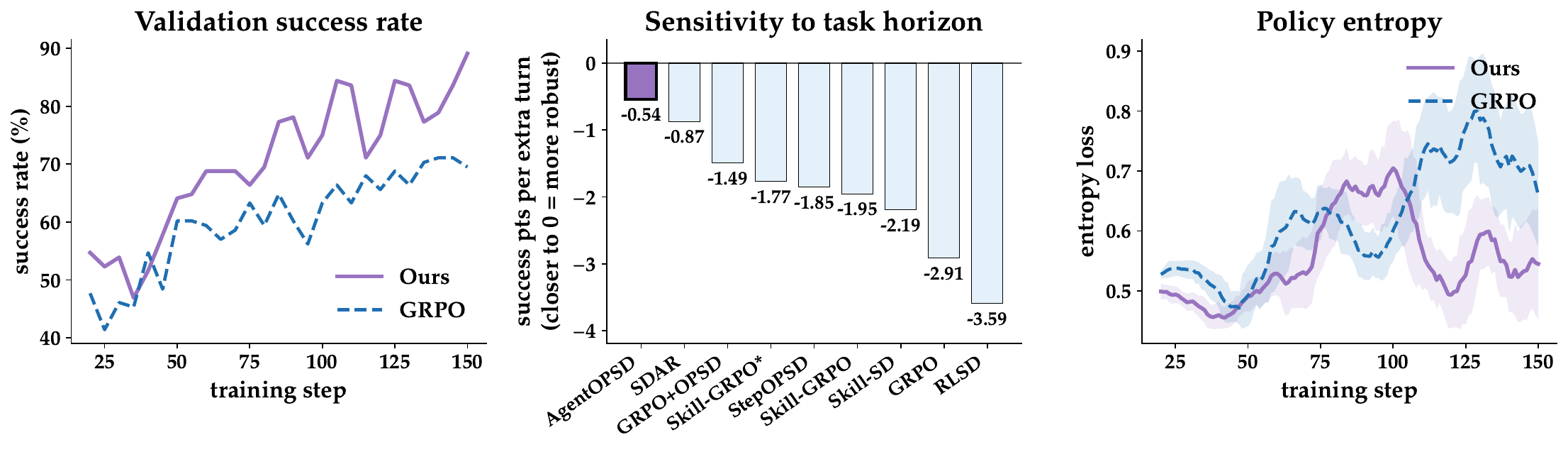}
\caption{\textbf{Training dynamics and horizon-robustness of \methodname{}} on Qwen2.5-7B-Instruct / ALFWorld.
\textbf{(a)} Validation success rate over training.
\textbf{(b)} Sensitivity to task horizon: success points lost per extra turn (OLS slope of per-sub-task
success against the measured mean turns of successful episodes).
\textbf{(c)} Policy entropy over training.}
\label{fig:dynamics_compare}
\end{figure}

\section{Introduction}

\label{sec:introduction}

Agentic post-training has become an important approach to improving the ability of large language models to solve complex tasks~\citep{guo2025ds-r1,team2025kimi,yang2025qwen3,comanici2025gemini,team2026longcat-2601}. Unlike static, single-turn reasoning, agents must continuously interact with partially observable environments, at each turn acting on the current observation and receiving a new one as the environment transitions~\citep{shen2023hugginggpt,shi2025toollearning,jimenez2023swebench}. However, many interactive environments provide verifiable rewards only after an entire trajectory terminates, forcing training algorithms to infer the contribution of each intermediate decision from a single sparse outcome. The decisions within a trajectory can nevertheless play substantially different roles: even successful trajectories may contain spurious, redundant, or misleading actions, whereas failed trajectories may still include useful reasoning.

Group-relative policy optimization methods such as GRPO~\citep{shao2024deepseekmath,yu2025dapo} and its agentic variants~\citep{dong2025arpo,feng2025gigpo} construct a trajectory-level advantage from outcome rewards and broadcast it uniformly across the trajectory. Such uniform credit cannot distinguish a few pivotal decisions from routine operations. This limitation becomes increasingly pronounced as the interaction horizon grows. Turn-level credit assignment is therefore essential for identifying the decisions that meaningfully influence the outcome and providing more precise supervision throughout long-horizon interactions.

A complementary line of work provides denser, token-level supervision. On-policy distillation~\citep{ye2026opcd,yang2026g-opd,coreteam2026mimov2} trains a student on its own rollouts under a teacher, while its \emph{self}-distillation variants~\citep{zhao2026opsd,he2026sdzero} remove the need for a separate teacher by conditioning the same policy on privileged information available only during training~\citep{lu2026skill0}. Recent studies further incorporate OPSD signals into reinforcement learning as an auxiliary source of supervision~\citep{lu2026sdar,wang2026skillsd}.

However, applying OPSD to agentic reinforcement learning introduces two mismatches. First, OPSD’s token-level signals are not naturally aligned with agentic interaction~\citep{lu2026sdar}, where multiple tokens jointly form an action and the environment responds only at turn boundaries. Second, even existing step-aware methods consider each turn in isolation~\citep{zhang2026stepopsd}, without accounting for the evidence accumulated through preceding interactions. The central challenge is therefore to transform local OPSD signals into history-dependent turn-level credit.


Our key insight is that \textbf{the credit of a turn should be determined not by its local signal in isolation, but by how much that signal changes the estimated probability of eventual success}. To formalize this intuition, we interpret the per-turn self-distillation gap as new evidence that induces a Bayesian belief update~\citep{astrom1965optimal,kaelbling1998planning}. We define the corresponding belief state as the probability that the trajectory will ultimately succeed given the interaction history.

Based on this insight, we propose \methodname{} (Recursive Self-Distillation for Agentic Reinforcement Learning), a turn-level credit-assignment method for long-horizon agents. 
\methodname{} aggregates token-level teacher--student log-probability gaps into turn-level evidence. Starting from the average group success rate, it then recursively updates Bayesian belief state at each turn in log-odds space without additional rollouts or a learned critic.
The outcome verifier determines the global direction of optimization, while the bayesian belief updates redistribute the trajectory-level learning signal across turns. 
We evaluate \methodname{} on three interactive environments---ALFWorld~\citep{shridhar2020alfworld}, WebShop~\citep{yao2022webshop}, and Search-QA~\citep{jin2025searchr1}---and across two model scales. As shown in Figure~\ref{fig:dynamics_compare}, \methodname{} consistently outperforms GRPO and strong self-distillation baselines. Further ablations show that gains from aggregating token-level signals at turn boundaries aligned with environment transitions, and transforming an isolated local gap into a recursive revision of belief.

Our contributions are summarized as follows:

\begin{itemize}

\item We formalize turn-level credit as the revision of a success belief induced by each turn. In log-odds space, this connects per-turn evidence to a recursive Bayesian update and reveals that an isolated self-distillation gap is not, by itself, sequential credit.

\item We introduce \methodname{}, which aggregates token-level teacher--student log-probability gaps into environment-aligned turn-level evidence and recursively propagates this evidence through the trajectory-success belief, without additional rollouts or a learned critic.

\item Experiments and ablations across three interactive environments and two model scales demonstrate that \methodname{} consistently outperforms GRPO and strong self-distillation baselines. Ablations further verify the complementary benefits of turn-boundary aggregation and recursive belief revision.

\end{itemize}

\begin{figure}[t]
\centering
\includegraphics[width=\textwidth]{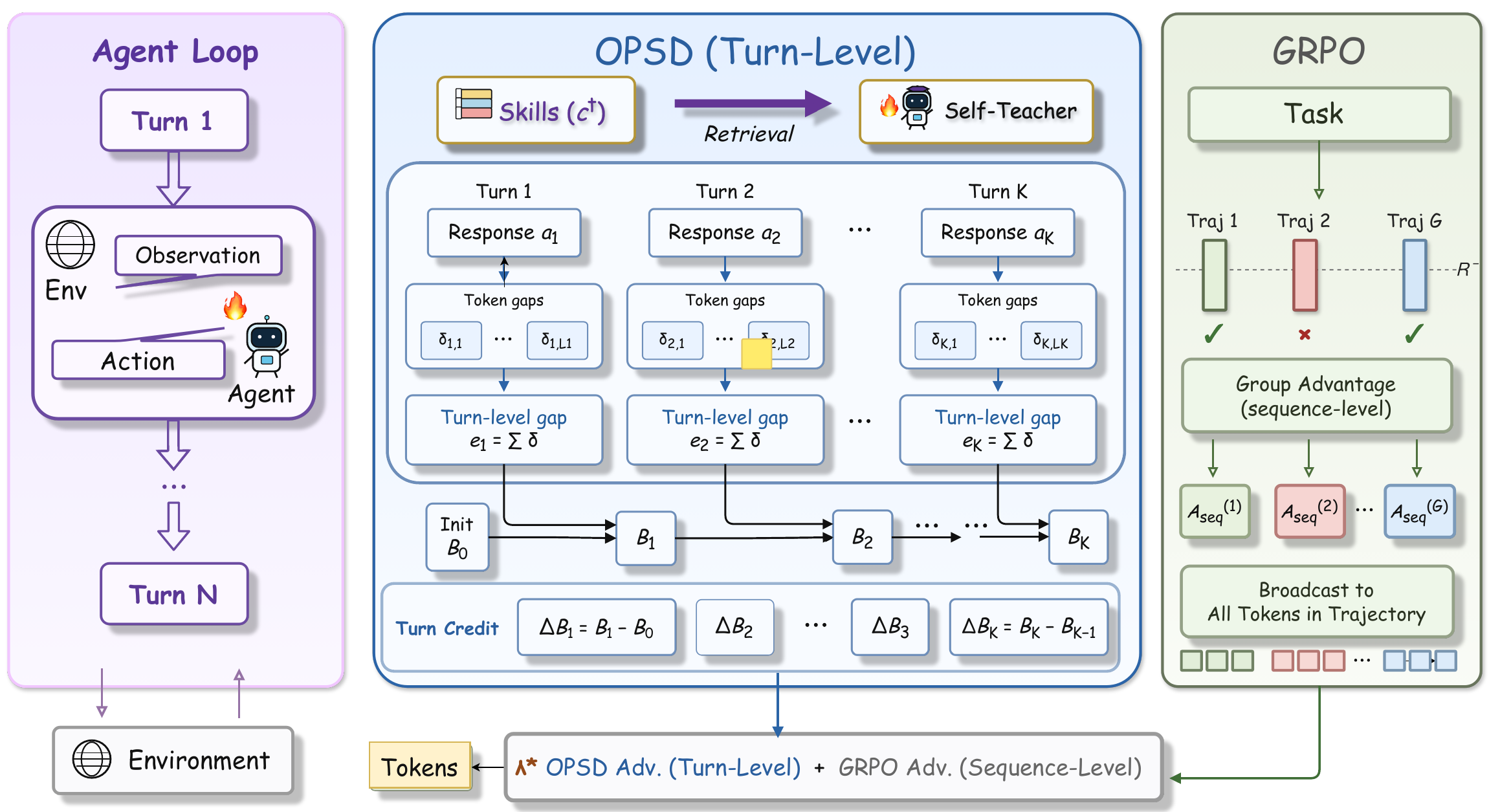}
\caption{\textbf{Overview of \methodname{}.} \textbf{Left:} the agent loop, interacting with the
environment over turns $1,\dots,K$. \textbf{Middle:} \methodname{} converts GRPO's single
sequence-level advantage into turn-level reshaped advantages in three steps: \textbf{(1)} aggregate
the token-level teacher--student gaps $\delta_{k,t}$ within a turn into a turn-level gap $e_k$;
\textbf{(2)} recursively update a belief state $B_k$ (initialized from the group success rate)
and read off its marginal revision $\Delta B_k = B_k - B_{k-1}$; \textbf{(3)} reshape the
sequence-level advantage $A^{(i)}_{seq}$ per turn into $\tilde{A}^{(i)}_k$. \textbf{Right:} vanilla
GRPO instead broadcasts the same $A^{(i)}_{seq}$ to every token/turn. Each token in turn $k$ inherits
$\tilde{A}_k$.}
\label{fig:overview}
\end{figure}

\section{Methodology}
\label{sec:method}

\subsection{Problem Setup}
\label{sec:setup}

Given a task $x$ and initial observation $o_0$, the agent starts from $s_1=(x,o_0)$. At turn $k$, it samples
where $\pi_\theta$ is current policy, $s_k$ is its visible interaction history, $y_{k,t}$ is the $t$-th token of action $a_k$, and $L_k$ is the action length. After observing $o_k$, the history becomes $s_{k+1}=(s_k,a_k,o_k)$. A $K$-turn episode forms $\boldsymbol{\tau}=(s_1,a_1,o_1,\ldots,s_K,a_K,o_K)$ and receives a binary outcome reward $R(\boldsymbol{\tau})$.
\begin{equation}
a_k=(y_{k,1},\ldots,y_{k,L_k})\sim\pi_\theta(\cdot\mid s_k),
\end{equation}
For each task, group-relative policy optimization samples $G$ trajectories and computes the sequence-level advantage. Here $i$ indexes one of the $G$ sampled trajectories, while $\bar R$ and $\widehat{\sigma}_R$ are the group reward mean and standard deviation, and $\epsilon_0$ is a small positive constant (reused throughout to avoid division by zero or infinite log-odds). GRPO assigns $A_{\mathrm{seq}}^{(i)}$ to every token in trajectory $i$, leaving turn-level credit unresolved.
\begin{equation}
A_{\mathrm{seq}}^{(i)}
=
\frac{R^{(i)}-\bar R}{\widehat{\sigma}_R+\epsilon_0},
\qquad
\bar R=\frac{1}{G}\sum_{j=1}^{G}R^{(j)}.
\label{eq:grpo_adv}
\end{equation}

\subsection{From Outcome Contribution to Bayesian Turn Evidence}
\label{sec:bayesian_evidence}

Directly measuring the counterfactual contribution of turn $k$ would
require marginalizing the outcome reward over all possible continuations
following $a_k$, which is intractable in long-horizon interactions. We
therefore adopt a hindsight-based evidential perspective. Let $C$ denote
the event that the trajectory eventually succeeds. If $a_k$ supports
success, it should be more characteristic of successful behavior than of
unsuccessful behavior. Bayes' rule expresses the resulting change in the
belief about $C$ as an action-side likelihood ratio~\citep{astrom1965optimal,kaelbling1998planning}:
\begin{equation}
\operatorname{logit}p(C\mid s_k,a_k)
-
\operatorname{logit}p(C\mid s_k)
=
\log
\frac{p(a_k\mid s_k,C)}
     {p(a_k\mid s_k,\neg C)}.
\label{eq:bayes_turn}
\end{equation}
Here $\operatorname{logit}(u)=\log\frac{u}{1-u}$. The right-hand side is
the ideal Bayes factor~\citep{kass1995bayesfactors} between the
success-conditional and failure-conditional likelihoods of $a_k$. Its
sign indicates whether the action increases or decreases support for
eventual success.

Because these outcome-conditional behavioral distributions are not
directly available, we estimate this Bayesian evidence retrospectively
using a computable per-turn self-distillation contrast between a
privileged, success-associated self-teacher and the standard student
policy. This contrast provides a tractable proxy for the otherwise
inaccessible belief update; we characterize the approximation and its
conditions below. The student and teacher share parameters $\theta$ and score the same student-generated action~\citep{zhao2026opsd}. Their token contexts are
\begin{equation}
h_{k,t}=(s_k,y_{k,<t}),
\qquad
h_{k,t}^{+}=(s_k,c^+,y_{k,<t}).
\end{equation}
where $y_{k,<t}$ is the token prefix within turn $k$, and $c^+$ is a training-only retrieved skill describing useful subgoals and action patterns~\citep{xia2026skillrl}. The skill-conditioned branch approximates success-associated behavior, while the unconditioned branch provides the background likelihood.

For token $y_{k,t}$, define the detached likelihood contrast
\begin{equation}
\delta_{k,t}
=
\log\pi_\theta(y_{k,t}\mid h_{k,t}^{+})
-
\log\pi_\theta(y_{k,t}\mid h_{k,t}).
\label{eq:gap}
\end{equation}
Positive $\delta_{k,t}$ means that $c^+$ increases the likelihood of the generated token. Summing over the $L_k$ tokens gives the turn-level evidence
\begin{equation}
e_k
=
\sum_{t=1}^{L_k}\delta_{k,t}
=
\log
\frac{\pi_\theta(a_k\mid s_k,c^+)}
     {\pi_\theta(a_k\mid s_k)}.
\label{eq:turn_evidence}
\end{equation}
Accordingly, $e_k$ provides a tractable hindsight approximation to the
ideal Bayesian turn evidence in Eq.~\eqref{eq:bayes_turn}, under the
conditions detailed in Appendix~\ref{app:bayes_approx}. More generally,
Bayes' rule gives
\begin{equation}
\log\frac{p(a_k\mid s_k,C)}
          {p(a_k\mid s_k)}
=
\log\frac{p(C\mid s_k,a_k)}
          {p(C\mid s_k)}.
\label{eq:bayes_evidence_proxy}
\end{equation}
Thus, $e_k$ can be interpreted as an evidential score whose sign indicates
whether $a_k$ raises or lowers support for eventual success. This
sign-consistent Bayesian evidence is precisely the property that
\methodname{} relies on. We therefore treat $e_k$ as a tractable
Bayesian-inspired evidence proxy.

\subsection{Recursive Belief update}
\label{sec:recursive_support}

The local score $e_k$ does not indicate whether the same evidence is pivotal or redundant given earlier turns. We therefore maintain a decaying evidence accumulator and measure each turn by how much it revises the current support state:
\begin{equation}
\begin{aligned}
B_0&=\operatorname{clip}(\bar R,\epsilon_0,1-\epsilon_0),
&c_0&=0,\\
c_k&=\gamma\,c_{k-1}+e_k,
&\ell_k&=\operatorname{logit}(B_0)+c_k
=\operatorname{logit}(B_0)+\sum_{j=1}^{k}\gamma^{\,k-j}e_j,
\end{aligned}
\label{eq:belief}
\end{equation}
with $B_k=\sigma(\ell_k)$ and $\sigma(u)=(1+e^{-u})^{-1}$. Here $\bar R=S/G$ is the fraction of successful
trajectories in the group of size $G$---the standard GRPO group mean (Prop.~\ref{prop:v0})---and $B_0$ clips
it to $[\epsilon_0,1-\epsilon_0]$ with $\epsilon_0{=}10^{-4}$ so its log-odds stay finite for all-correct or
all-wrong groups. $c_k$ is the accumulated
evidence, and $\gamma\in(0,1]$ is a decay factor that down-weights
older turns geometrically; only the evidence $c_k$ decays, while the prior $\operatorname{logit}(B_0)$ is
retained at every step. Setting $\gamma{=}1$ recovers the undiscounted accumulation of a log-likelihood
ratio familiar from sequential testing~\citep{wald1945sequential}; $\gamma{<}1$ makes the state
recency-weighted, so that evidence from many turns ago no longer pins the support level. Since $e_k$ is
estimated by the self-teacher (\S\ref{sec:bayesian_evidence}), $B_k$ is treated as relative support rather
than a calibrated success probability.

The importance of turn $k$ is its marginal support revision:
\begin{equation}
\begin{aligned}
\Delta B_k
&=B_k-B_{k-1}
=\sigma(\ell_k)-\sigma(\ell_{k-1}),\\
\Delta B_k
&\approx B_{k-1}(1-B_{k-1})\big(e_k-(1-\gamma)\,c_{k-1}\big) .
\end{aligned}
\label{eq:belief_revision}
\end{equation}
The increment $\ell_k-\ell_{k-1}=e_k-(1-\gamma)c_{k-1}$ is the new evidence net of the decayed carry-over,
and it is weighted by the current state sensitivity $B_{k-1}(1-B_{k-1})$: evidence has greatest effect
under uncertainty and is suppressed once support saturates. At $\gamma{=}1$ this reduces to
$B_{k-1}(1-B_{k-1})\,e_k$. We update at turn boundaries; a token-level variant is used only as an ablation.

\paragraph{Outcome-aligned recursive credit.}
\label{sec:outcome_credit}

We align the revision with the terminal update and read off its magnitude and direction:
\begin{equation}
q_k
=
\operatorname{sign}(A_{\mathrm{seq}})\,\Delta B_k.
\label{eq:credit}
\end{equation}
The \emph{magnitude} $|\Delta B_k|=|q_k|$ measures how much support the turn revises, while its
\emph{sign} $\operatorname{sign}(q_k)$ records whether that revision agrees with the verifier's
outcome signal. After the within-trajectory standardization below, turns with above-average $q_k$ are
amplified and those below-average are attenuated; since the multiplier stays strictly positive, this never
reverses the GRPO update direction.

\subsection{Bounded Advantage Reshaping}
\label{sec:reshape}

The raw credit $q_k$ only modulates the magnitude of the verifier-derived advantage. For trajectory $i$, we normalize its $K_i$ turn credits and apply a bounded multiplier:
\begin{equation}
\begin{aligned}
\mu_q^{(i)}&=K_i^{-1}\sum_{j=1}^{K_i}q_j^{(i)}, &
\sigma_q^{(i)}&=\sqrt{K_i^{-1}\sum_{j=1}^{K_i}\big(q_j^{(i)}-\mu_q^{(i)}\big)^2},\\[-0.1em]
z_k^{(i)}&=\frac{q_k^{(i)}-\mu_q^{(i)}}{\sigma_q^{(i)}+\epsilon_0}, &
w_k^{(i)}&=\operatorname{clip}\!\left(1+bz_k^{(i)},\,1-b,\,1+b\right),\\[-0.1em]
\widetilde A_k^{(i)}&=A_{\mathrm{seq}}^{(i)}\big[(1-\lambda)+\lambda w_k^{(i)}\big], &
b&\in(0,1),\quad\lambda\in[0,1].
\end{aligned}
\label{eq:bounded_reshape}
\end{equation}
Here $\mu_q^{(i)}$ and $\sigma_q^{(i)}$ are the within-trajectory mean and standard deviation, $z_k^{(i)}$ is the normalized credit ($\epsilon_0$ stabilizes the normalization), $b$ sets $w_k^{(i)}\in[1-b,1+b]$, and $\lambda$ controls reshaping strength. 

Token $t$ inherits $\widetilde A_{\kappa_i(t)}^{(i)}$, yielding
\begin{equation}
\begin{aligned}
\mathcal{L}_{\methodname}(\theta)
&=-\frac{1}{G}\sum_{i=1}^{G}\frac{1}{\sum_tM_{i,t}}
\sum_tM_{i,t}\min\!\Big(
r_{i,t}\widetilde A_{\kappa_i(t)}^{(i)},
\operatorname{clip}(r_{i,t},1-\varepsilon,1+\varepsilon)
\widetilde A_{\kappa_i(t)}^{(i)}\Big)
+\beta\mathcal{L}_{\mathrm{KL}},
\\
r_{i,t}&=\frac{\pi_\theta(y_{i,t}\mid h_{i,t})}{\pi_{\theta_{\mathrm{old}}}(y_{i,t}\mid h_{i,t})}.
\end{aligned}
\label{eq:agentopsd_objective}
\end{equation}
Here $M_{i,t}\in\{0,1\}$ masks valid response tokens, $\kappa_i(t)$ maps token $t$ to its turn, $r_{i,t}$ is the importance ratio against the rollout policy $\pi_{\theta_{\mathrm{old}}}$, $\varepsilon$ is the clipping radius, and $\beta$ is its coefficient. No separate distillation loss is introduced; the detached self-teacher signal acts only through $\widetilde A$.

\input{tables/experiment}

\section{Experiments}

\subsection{Experimental Setup}
\label{sec:experiments}

\paragraph{Benchmarks.} We evaluate on three environments. \textit{ALFWorld}~\citep{shridhar2020alfworld}
is a text embodied benchmark over six household task categories---Pick and Place (Pick), Look at Object in
Light (Look), Pick Clean then Place (Clean), Pick Heat then Place (Heat), Pick Cool then Place (Cool), and
Pick Two and Place (Pick2). \textit{Search-QA} follows the Search-R1 setup~\citep{jin2025searchr1} and
covers single-hop QA (NQ~\citep{kwiatkowski2019nq}, TriviaQA~\citep{joshi2017triviaqa},
PopQA~\citep{mallen2023popqa}) and multi-hop QA (HotpotQA~\citep{yang2018hotpotqa}, 2Wiki~\citep{ho20202wiki},
MuSiQue~\citep{trivedi2022musique}, Bamboogle~\citep{press2023bamboogle}), with NQ and HotpotQA in-domain
and the rest held out; retrieval uses E5~\citep{wang2022e5}. \textit{WebShop}~\citep{yao2022webshop} is an
interactive online-shopping environment; we evaluate on the 128 fixed validation tasks
of~\citet{feng2025gigpo}.

\paragraph{Implementation.} We train Qwen2.5-3B/7B-Instruct on
$8\times$H800 GPUs. The privileged
skills are retrieved from the \texttt{SkillBank} of SkillRL~\citep{xia2026skillrl} by keyword matching and
are used only during training; inference uses no external skills. Prior $B_0$ set to the fraction of successful trajectories in each
GRPO group (the standard group mean $\bar R$). All other optimization settings are shared with the SDAR baseline. Full training and \methodname{} hyperparameters are listed in Appendix~\ref{app:hyperparams} (Table~\ref{tab:hyperparams}).

\paragraph{Baselines.} We compare against three groups.
\textbf{(1) Training-free:} \textit{Vanilla} (the base model) and \textit{Skill-Prompt}, which prepends
retrieved skills at inference.
\textbf{(2) Group-relative RL:} \textit{GRPO}~\citep{shao2024deepseekmath} and \textit{Skill-GRPO}, which injects skills into the training
prompt (evaluated with, \textit{Skill-GRPO*}, or without retrieved skills).
\textbf{(3) Self-distillation RL:} \textit{OPSD}~\citep{zhao2026opsd}, \textit{GRPO+OPSD},
\textit{Skill-SD}~\citep{wang2026skillsd}, \textit{RLSD}~\citep{yang2026rlsd}, 
\textit{SDAR}~\citep{lu2026sdar} and \textit{StepOPSD}~\citep{zhang2026stepopsd}, all of which use the teacher--student gap but inject it as a gate, magnitude, or auxiliary loss. All methods share the same backbone, data, and budget. Full
algorithm details are in Appendix~\ref{appendix:algorithm}.

\subsection{Main Results}

\paragraph{The gain comes from credit construction, not privileged access.}
Under our unified setup, \methodname{} and the privileged baselines use the
same retrieved skills; they differ primarily in how the skill-induced
teacher--student discrepancy enters learning. \methodname{} outperforms
GRPO+OPSD, Skill-SD, and RLSD on all eight aggregate comparisons across the
two model scales, and exceeds SDAR on six of eight.
This controlled-information comparison isolates the benefit of
\methodname{}: a local teacher-student gap is not yet a reliable credit signal.
Accumulating that gap into a belief state and assigning credit according
to belief revision more effectively identifies the turns that change the
predicted outcome.

\paragraph{The advantage grows with the interaction horizon.}
\methodname{} is designed for the regime where uniform credit is most harmful, so we also ask how
performance degrades as tasks require more turns. Figure~\ref{fig:dynamics_compare}(b) regresses per-sub-task
success on the measured mean number of turns of successful episodes on ALFWorld (Qwen2.5-7B), reporting the
success points lost per additional turn. The uniform-credit methods degrade fastest ($-3.59$ for RLSD and
$-2.91$ for GRPO points per turn), whereas \methodname{} is the flattest at $-0.54$. This is consistent with
the motivation for turn-level credit: the longer the trajectory, the more decisions a single broadcast
advantage has to cover, and the more a history-dependent revision helps.

\subsection{Mechanism Ablation}
\label{sec:mechanism_ablation}

Table~\ref{tab:ablation} evaluates each design choice on ALFWorld with
Qwen2.5-7B by removing or replacing one component at a time. The full
method achieves a success rate of $89.1$.
\input{tables/ablation}
\paragraph{Granularity and recursion.}
Replacing turn-level belief tracking with per-token accumulation reduces
the success rate to $85.9$: environment feedback is associated with a
complete action rather than an individual token, so token-level accumulation
fragments a single decision and weakens the alignment between
the gap and outcomes. Replacing the recursive revision $\Delta B_k$ with the
raw local gap $e_k$ further reduces performance to $82.8$. A raw $e_k$ scores
each turn in isolation, whereas $\Delta B_k$ measures how that gap revises the
belief state accumulated over the preceding history---so the same local
gap that is decisive while the outcome is open becomes redundant once the
accumulated state already points to an outcome. This controlled comparison
isolates the value of the recursion and directly confirms our central principle
that a local gap is not sequential credit.
\paragraph{Outcome-aligned signed direction.}
Keeping only the magnitude $|\Delta B_k|$ and dropping the sign (Eq.~\ref{eq:credit}), i.e.\ standardizing
$|\Delta B_k|$ instead of the signed $q_k$, lowers performance
to $80.5$. The magnitude identifies where the belief state changes, but cannot
determine whether that change agrees with the verifier outcome. For a
successful trajectory, an upward belief revision is consistent with the
outcome, whereas for a failed trajectory the same revision is inconsistent.
The signed direction makes this distinction explicit, allowing
outcome-consistent revisions to receive more credit and contradictory
revisions to receive less.

\paragraph{State-prior anchoring.}
Removing the empirical prior
$B_0=\operatorname{clip}(\bar R,\epsilon_0,1-\epsilon_0)$ reduces the
success rate to $78.9$. The group success rate $\bar R$ provides a
verifier-grounded estimate of task difficulty before the trajectory-specific
gap is accumulated. Moreover, $B_0$ determines the initial log-odds
and thus the operating region of the $B(1-B)$ gate. Without this anchor,
trajectories begin from an arbitrary uncertainty level, which can
mis-scale early belief revisions and distort which early turns appear
pivotal. The ablations therefore separate three roles: belief revision
localizes credit, the signed direction aligns it with the final outcome,
and prior anchoring stabilizes its reference point.

\subsection{Hyperparameter Sensitivity}
\label{sec:sensitivity}

Whereas the mechanism ablation asks whether each component is necessary, we now
examine how sensitive \methodname{} is to its continuous hyperparameters. We sweep
one knob at a time while holding the others at the full-\methodname{} setting
($\lambda{=}0.5$, $\gamma{=}0.95$, $\epsilon_{\mathrm{high}}{=}0.24$; Appendix~\ref{app:hyperparams})
across three configurations (Figure~\ref{fig:sensitivity}): long-horizon ALFWorld with
Qwen2.5-7B ($89.1$) and Qwen2.5-3B ($84.4$), and short-horizon Search-QA with
Qwen2.5-3B ($46.7$).
\begin{figure}[t]
\centering
\includegraphics[width=\textwidth]{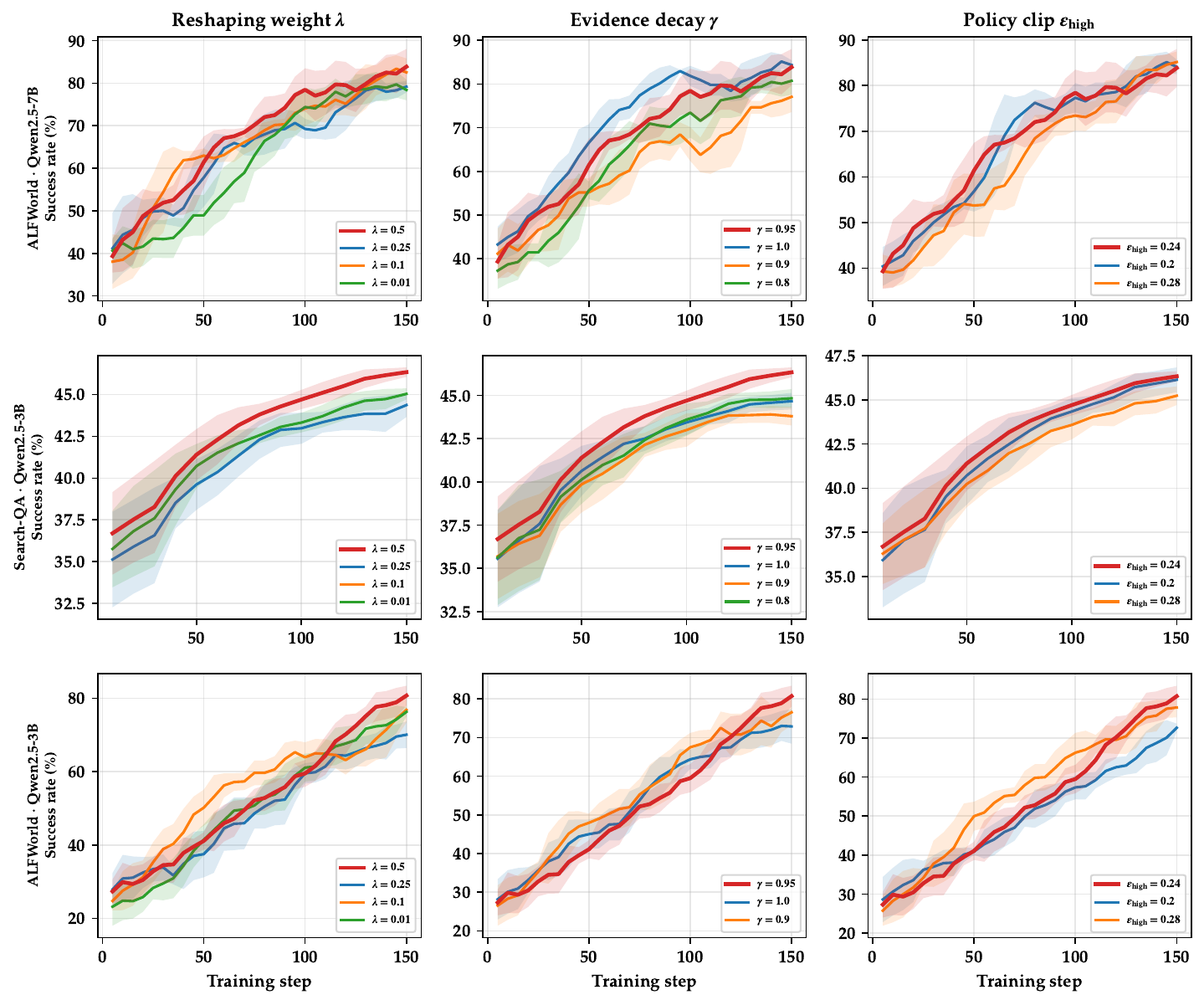}
\caption{\textbf{Hyperparameter sensitivity of \methodname{}.} Rows: ALFWorld
(Qwen2.5-7B), Search-QA (Qwen2.5-3B), and ALFWorld
(Qwen2.5-3B). Columns sweep one knob ($\lambda$, $\gamma$,
$\epsilon_{\mathrm{high}}$) with the others held at our setting.
Curves are rolling means; shaded bands show the local $\pm1$ standard deviation.}
\label{fig:sensitivity}
\end{figure}

\paragraph{Reshaping weight $\lambda$.}
Sweeping $\lambda\in\{0.5,0.25,0.1,0.01\}$ interpolates between pure GRPO
($\lambda{=}0$) and full belief reshaping. This is the knob with the clearest effect: $\lambda{=}0.5$ is
best and any smaller value reduces performance ($89.1$ at $\lambda{=}0.5$ vs.\ $84.4/85.9/83.6$; Search
$46.7$ vs.\ $45.1/40.2/45.4$), consistent with a smaller $\lambda$ down-weighting the bounded multiplier
and discarding turn-level credit. We use $\lambda{=}0.5$ throughout.
\paragraph{Evidence decay $\gamma$.}
Turn-level evidence is accumulated with a geometric decay
$c_k=\gamma\,c_{k-1}+e_k$ (equivalently $\ell_k=\operatorname{logit}(B_0)+\sum_{j\le k}\gamma^{\,k-j}e_j$); sweeping
$\gamma\in\{1.0,0.95,0.9,0.8\}$ moves the result within a few points
($87.5/82.0/85.2$; Search $45.1/44.5/45.5$) without a monotone trend, so the recursion is not particularly
sensitive to how fast old evidence is discounted. We use the mild setting $\gamma{=}0.95$ for all main
results.
\paragraph{Policy clipping $\epsilon_{\mathrm{high}}$.}
Fixing $\epsilon_{\mathrm{low}}{=}0.2$ and varying
$\epsilon_{\mathrm{high}}\in\{0.2,0.24,0.28\}$ (clip-higher~\citep{yu2025dapo}),
\methodname{} is largely unaffected ($88.3$ at both $0.2$ and $0.28$; Search $46.9$
and $45.7$), indicating that the reshaped objective inherits the trust-region
robustness of GRPO. Overall, only $\lambda$ produces a systematic effect, and the spread across all knobs
shrinks sharply on the four-turn Search-QA task---the settings that matter on long-horizon ALFWorld are
largely inert when little history accumulates, which is again consistent with the method acting where
long-horizon credit assignment is needed.

\section{Related Work}



\subsection{Agentic Post-Training with Verifiable Rewards}

Reinforcement learning with verifiable rewards has advanced from
single-turn reasoning~\citep{shao2024deepseekmath,guo2025ds-r1,yu2025dapo,lu2026uir1}
to long-horizon agents in embodied text worlds, web shopping, and
retrieval-augmented question answering~\citep{shridhar2020alfworld,
yao2022webshop,jin2025searchr1,lu2025uis1}. In these interactive settings, sparse
terminal rewards make turn-level credit assignment particularly
challenging.
Standard GRPO broadcasts a trajectory-level advantage uniformly across
turns. GiGPO~\citep{feng2025gigpo} improves reward-side credit assignment
by combining episode-level advantages with step-level advantages
estimated from repeated anchor states across trajectories. In contrast,
\methodname{} derives turn-level evidence from privileged
teacher--student likelihood gaps and assigns credit through recursive
belief revision. GiGPO and \methodname{} therefore operate on
complementary signal sources---environment rewards and self-distillation
evidence, respectively.

\subsection{On-Policy (Self-)Distillation}

On-policy distillation trains a policy on its own rollouts under a
teacher~\citep{agarwal2024gkd,gu2026minillm,wen2023fdivergence}. Its recent
self-distillation variants remove the need for a separate teacher while the teacher branch is
conditioned on privileged information available only during
training~\citep{zhao2026opsd,he2026sdzero,lu2026skill0}. Recent studies
incorporate the resulting teacher--student log-probability gap into RLVR
by using it to scale or reshape the advantage~\citep{yang2026rlsd}, as a
detached auxiliary objective~\citep{lu2026sdar,wang2026skillsd}, or as
reweighted, scheduled, or reward-densifying local
supervision~\citep{xu2026tip,wang2026tcod,ye2026opcd,he2026sdzero}.
Existing methods predominantly treat the
distillation gap as a local token-level or step-level signal. Token-level signals are not naturally aligned
with action turns, and the contribution of a turn depends on the evidence
accumulated through preceding interactions.
StepOPSD~\citep{zhang2026stepopsd} aggregates the teacher--student signal
over action-centered step spans but still scores each span by its local
log-ratio. In contrast, \methodname{} first aggregates token-level gaps
within each turn and then recursively accumulates the resulting evidence
into a running support state.
\subsection{Long-Horizon Credit Assignment}

Assigning credit across a long horizon is a classical problem. PPO learns a value function and, via
GAE, derives a per-step temporal-difference signals~\citep{schulman2017ppo,schulman2016gae}.
When rewards are sparse and delayed, return-decomposition methods such as RUDDER redistribute a terminal
reward to the steps responsible for it~\citep{arjona2019rudder}, while process reward models and
Monte-Carlo credit methods such as VinePPO estimate intermediate value by additional rollouts or a
learned scorer~\citep{cui2025prime,kazemnejad2024vineppo}. These approaches recover per-step structure
but reintroduce the cost GRPO removed: a trained critic, a reward model, or many extra rollouts. \methodname{}
restores a per-turn value signal in the critic-free group-relative
setting, at the cost of a single teacher forward pass. The belief state plays the
role of GAE's value baseline and its per-turn revision the role of the TD signal, but without a learned
value network cost.

\section{Conclusion}
We studied credit assignment for long-horizon language agents, where trajectory-level rewards provide limited supervision for distinguishing pivotal decisions from routine or redundant actions. Our key insight is that turn-level credit should depend not only on a local signal, but also on how that signal revises the accumulated belief in eventual trajectory success. Based on this insight, we proposed \methodname{}, which aggregates token-level self-distillation gaps at environment-aligned turn boundaries and recursively updates a trajectory-success belief in log-odds space. These belief revisions redistribute the trajectory-level advantage across turns without requiring additional rollouts or a learned critic. Experiments across three interactive environments and two model scales show that \methodname{} consistently outperforms GRPO and strong self-distillation baselines. Ablations further confirm the importance of both turn-level signal aggregation and history-dependent belief revision. Overall, our results suggest that recursive belief updating provides a simple and effective approach to restoring fine-grained temporal credit in critic-free agentic reinforcement learning.

\newpage
\bibliography{colm2026_conference}
\bibliographystyle{iclr2026_conference}
\newpage
\appendix
\renewcommand{\contentsname}{Table of Contents}
\setcounter{tocdepth}{2}
\tableofcontents
\newpage
\include{sections/proof}

\include{sections/algorithm}

\section{Datasets}
\label{app:datasets}
Our experiments span three multi-turn agentic environments covering embodied household reasoning,
web navigation, and search-augmented question answering.

\paragraph{ALFWorld}~\citep{shridhar2020alfworld} is a text-based embodied environment with six task
categories---Pick and Place, Look at Object in Light, Pick Clean then Place, Pick Heat then Place,
Pick Cool then Place, and Pick Two and Place. Given a language goal and textual observations, the agent
selects admissible actions until the goal is satisfied.

\paragraph{WebShop}~\citep{yao2022webshop} is an interactive online-shopping environment. For each user
request the agent searches the product catalog, inspects candidate items, selects the required attributes,
and attempts a purchase satisfying the specified constraints. We evaluate on the $128$ fixed validation
tasks of~\citet{feng2025gigpo}.

\paragraph{Search-QA} follows the Search-R1 setup~\citep{jin2025searchr1} over seven datasets: single-hop
NQ~\citep{kwiatkowski2019nq}, TriviaQA~\citep{joshi2017triviaqa}, PopQA~\citep{mallen2023popqa} and
multi-hop HotpotQA~\citep{yang2018hotpotqa}, 2Wiki~\citep{ho20202wiki}, MuSiQue~\citep{trivedi2022musique},
Bamboogle~\citep{press2023bamboogle}, with NQ and HotpotQA in-domain and the rest held out. The agent
issues search queries, inspects retrieved documents (retrieval via E5~\citep{wang2022e5}), and synthesizes
the collected evidence before returning its final answer~\citep{ye2026look,chen2026learning,chen2026understanding,ji2026tiny,zhou2025memento,xu2024reducing,xu2025alignment}.

\section{Baseline Details}
\label{app:baseline_details}
We compare against three groups of baselines. Unless a method is marked with $*$, evaluation uses only the
standard task prompt and the interaction history returned by the environment; $*$ indicates that a retrieved
skill is additionally supplied during validation and testing.

\paragraph{Vanilla.} The instruction-tuned backbone evaluated without any post-training.
\paragraph{Skill-Prompt$^{*}$.} The same frozen parameters as Vanilla, but a retrieved task-relevant skill
is prepended to the context at validation/test time, measuring the inference-time value of skills without
any parameter update.
\paragraph{GRPO}~\citep{shao2024deepseekmath}. A critic-free group-relative RL algorithm: it samples a
group of trajectories per task, normalizes their terminal rewards into relative advantages, and optimizes a
clipped surrogate objective; every token inherits its trajectory's sequence-level advantage.
\paragraph{Skill-GRPO / Skill-GRPO$^{*}$.} GRPO with a retrieved skill injected into the training prompt.
The skill is removed at inference for Skill-GRPO (testing whether the guidance has been internalized), and
kept at inference for Skill-GRPO$^{*}$.
\paragraph{OPSD}~\citep{zhao2026opsd}. On-policy self-distillation: a teacher branch conditioned on
training-only privileged context re-scores the student's sampled tokens and produces dense token-level
targets through distribution matching; the teacher outputs are detached and the privileged context is not
used at inference.
\paragraph{GRPO+OPSD.} Jointly optimizes the trajectory-level GRPO loss and the token-level OPSD objective,
a straightforward combination of outcome-based RL and generic self-distillation.
\paragraph{Skill-SD}~\citep{wang2026skillsd}. Supplies the retrieved skill only to the teacher branch and
trains the student to absorb the skill-conditioned guidance via an importance-weighted distillation loss,
without requiring skills at evaluation.
\paragraph{RLSD}~\citep{yang2026rlsd}. Converts the teacher--student log-probability gap into a bounded
coefficient that scales the magnitude of each token's GRPO update; the sign of the update remains determined
by the outcome-derived advantage.
\paragraph{SDAR}~\citep{lu2026sdar}. Adds a separately gated auxiliary self-distillation loss on top of GRPO,
leaving the original GRPO advantage unchanged and using a bounded gate to modulate each teacher signal.
\paragraph{StepOPSD}~\citep{zhang2026stepopsd}. Applies the teacher--student self-distillation signal at the
turn (step) level rather than per token, but uses each step's local signal in isolation.

All post-training baselines share the same backbone models, environment interfaces, data, and training budget
as \methodname{}; they differ primarily in their optimization objective and in whether skills are available
during training or evaluation.

\section{Evaluation Metrics}
\label{app:metrics}
\paragraph{ALFWorld.} We report the overall success rate over the evaluation tasks (the fraction of episodes
that reach the specified goal), which is the sample-weighted average of the six per-category success rates.
\paragraph{Search-QA.} We report the overall accuracy over all evaluation questions aggregated across the
seven datasets.
\paragraph{WebShop.} We report a normalized completion Score (averaged over partial-constraint satisfaction
and scaled by $100$) and an exact-completion success rate Succ.\ (the percentage of episodes that satisfy all
specified requirements).

\section{Hyperparameters}
\label{app:hyperparams}
Table~\ref{tab:hyperparams} summarizes the hyperparameters used by \methodname{} across all our experiments.
We deliberately use a \emph{single} setting for every environment and model scale rather than tuning per
task: \methodname{} runs at turn-level granularity with reshaping weight $\lambda{=}0.5$, multiplier band
$b{=}0.2$, gap accumulation with decay $\gamma{=}0.95$, policy
clipping $\epsilon_{\mathrm{low}}{=}0.2$ / $\epsilon_{\mathrm{high}}{=}0.24$, and the empirical group
success rate $\bar R=S/G$ (clipped) as the state prior $B_0$.
The sensitivity study in \S\ref{sec:sensitivity} sweeps $\lambda$, $\gamma$ and $\epsilon_{\mathrm{high}}$
around this setting and finds no swept value that improves on it by a meaningful margin, which is why one
shared configuration is used throughout rather than per-environment tuning.
\begin{table}[h]
\centering
\caption{\textbf{Hyperparameters}. $\eta$: learning rate; $G$: group size; $\epsilon_{\mathrm{low}}/\epsilon_{\mathrm{high}}$: PPO clip range; $\alpha_{\mathrm{KL}}$: KL penalty coefficient toward the reference policy; SRS: skill retrieval strategy (KM = keyword matching). \methodname{}-specific reshaping knobs: $\lambda$ (\texttt{mult\_lambda}, reshaping weight), $b$ (\texttt{mult\_band}, multiplier band), and $\gamma$ (\texttt{gap\_decay\_gamma}, gap-accumulation decay). A single shared setting is used across all environments and model scales.}
\label{tab:hyperparams}
\begin{tabular}{l c c c c c c c c}
\toprule
\textbf{Method} & $\eta$ & $G$ & $\epsilon_{\mathrm{low}}/\epsilon_{\mathrm{high}}$ & $\alpha_{\mathrm{KL}}$ & $\lambda$ & $b$ & $\gamma$ & SRS \\
\midrule
\methodname{} & $10^{-6}$ & 8 & 0.2/0.24 & 0.01 & 0.5 & 0.2 & 0.95 & KM \\
\bottomrule
\end{tabular}
\end{table}

We use a single shared optimization recipe across environments (learning rate, group size, PPO
clip range, and KL coefficient in Table~\ref{tab:hyperparams}; dual-clip constant $c{=}3.0$,
gradient clipping $1.0$, entropy coefficient $0.001$, one PPO epoch per update, and FSDP on a
single node). The remaining settings are environment-specific and summarized in
Table~\ref{tab:env_config}.

\begin{table}[h]
\centering
\caption{\textbf{Per-environment training configuration.} Optimization settings shared across all
environments are listed in Table~\ref{tab:hyperparams}; the environment-specific settings are below.}
\label{tab:env_config}
\begin{tabular}{l c c c}
\toprule
 & \textbf{ALFWorld} & \textbf{WebShop} & \textbf{Search-QA} \\
\midrule
Training steps & 150 & 150 & 150 \\
Train batch size & 16 & 16 & 128 \\
Rollout group size $G$ & 8 & 8 & 8 \\
Max prompt length & 2048 & 4096 & 4096 \\
Max response length & 512 & 512 & 512 \\
Max interaction turns & 50 & 15 & 4 \\
Rollout temperature (train / val) & 1.0 / 0.4 & 1.0 / 0.4 & 1.0 / 0.4 \\
GPUs (tensor-parallel size) & 8 (2) & 2 (2) & 4 (1) \\
\bottomrule
\end{tabular}
\end{table}

\section{Training Dynamics}
We present the full training dynamics of \methodname{} across all model scales and environments in Figures~\ref{fig:metrics_teacher_gap_mean}--\ref{fig:metrics_critic_score_mean}, tracking the teacher--student gap and the reward throughout training.

\begin{figure}[h]
\centering
\includegraphics[width=\columnwidth]{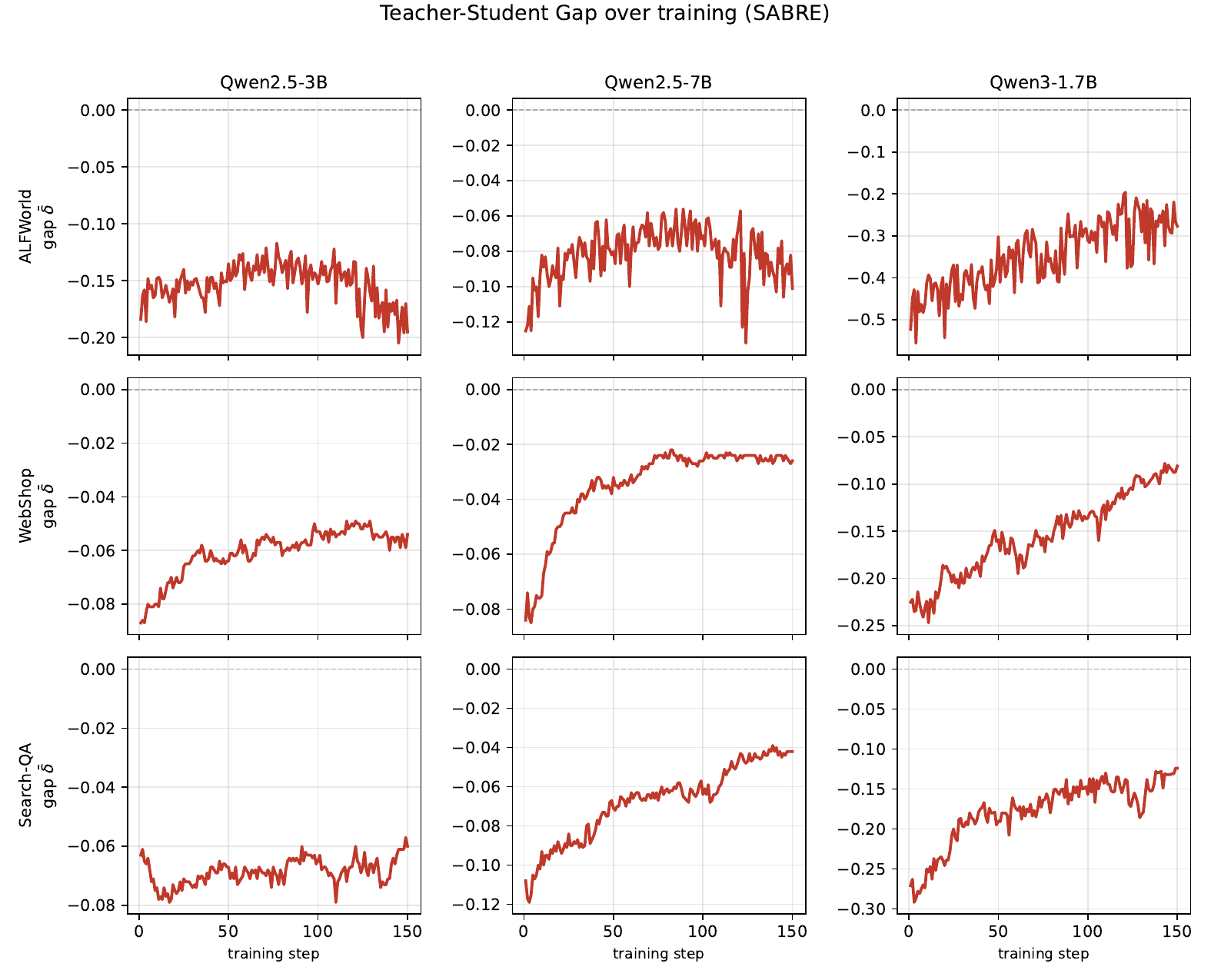}
\caption{\textbf{Teacher--Student Gap} $\bar\delta$ when training Qwen2.5-3B and Qwen2.5-7B on ALFWorld, WebShop and Search-QA.}
\label{fig:metrics_teacher_gap_mean}
\end{figure}

\begin{figure}[h]
\centering
\includegraphics[width=\columnwidth]{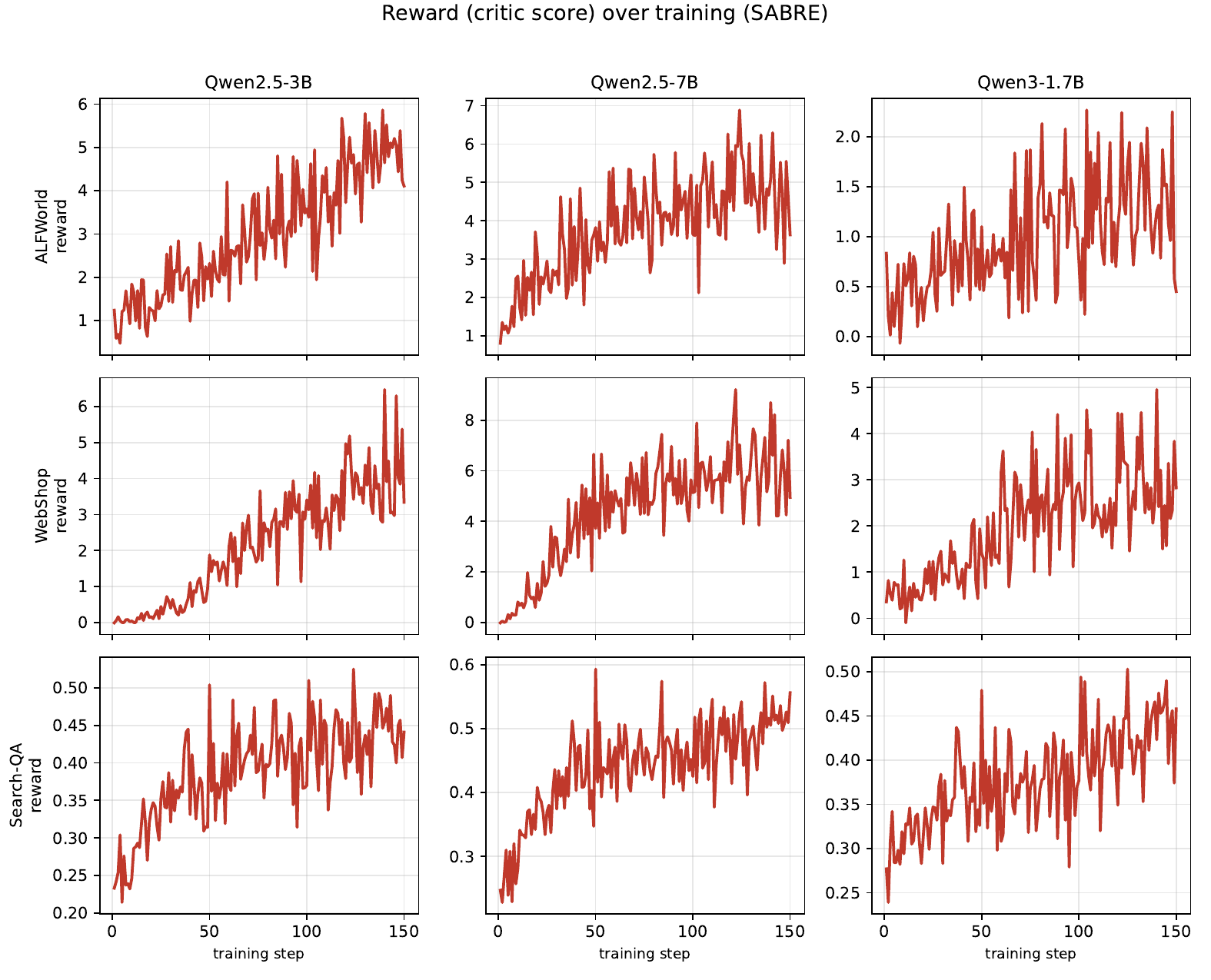}
\caption{\textbf{Reward Curve} when training Qwen2.5-3B and Qwen2.5-7B on ALFWorld, WebShop and Search-QA.}
\label{fig:metrics_critic_score_mean}
\end{figure}

\newpage
\section{Prompt}

Figures~\ref{fig:prompt_alfworld}--\ref{fig:prompt_webshop} present the full prompt templates used by \methodname{} for the three evaluation environments, where \texttt{\{skill\_context\}} is populated with the retrieved skill during training and left empty at inference time. 

\begin{figure}[h]
\centering
\begin{templatebox}{Prompt of \methodname on ALFWorld}
You are an expert agent operating in the ALFRED Embodied Environment. Your task is to: \{task\_description\}.

\{skill\_context\}

Prior to this step, you have already taken \{step\_count\} step(s). Below are the most recent \{history\_length\} observations and the corresponding actions you took: \{action\_history\}

You are now at step \{current\_step\} and your current observation is: \{current\_observation\}

Your admissible actions of the current situation are: [\{admissible\_actions\}].

Now it's your turn to take an action.
You should first reason step-by-step about the current situation. This reasoning process MUST be enclosed within \texttt{<think> </think>} tags.
Once you've finished your reasoning, you should choose an admissible action for current step and present it within \texttt{<action> </action>} tags.
\end{templatebox}
\caption{Prompt template used by \methodname{} for the ALFWorld task environment.}
\label{fig:prompt_alfworld}
\end{figure}

\begin{figure}[h]
\centering
\begin{templatebox}{Prompt of \methodname on Search-based QA}
You are an expert agent tasked with answering the given question step-by-step.

\{skill\_context\}

Your question: \{task\_description\}.

Prior to this step, you have already taken \{step\_count\} step(s). Below is the interaction history where \texttt{<search> </search>} wrapped your past search queries and \texttt{<information> </information>} wrapped the corresponding search results returned by the external search engine. History:

\{memory\_context\}

Now it's your turn to respond for the current step.
You should first conduct a reasoning process. This process MUST be enclosed within \texttt{<think> </think>} tags.
After completing your reasoning, choose only one of the following actions (do not perform both):
\begin{enumerate}
    \item If you find you lack some knowledge, you \textbf{MUST} call a search engine to get more external information using format: \texttt{<search> your query </search>}.
    \item If you have enough knowledge to answer the question confidently, provide your final answer within \texttt{<answer> </answer>} tags, without detailed illustrations. For example, \texttt{<answer>Beijing</answer>}.
\end{enumerate}
\end{templatebox}
\caption{Prompt template used by \methodname{} for the Search-based QA task environment.}
\label{fig:prompt_searchqa}
\end{figure}

\begin{figure}[h]
\centering
\begin{templatebox}{Prompt of \methodname on WebShop}
You are an expert autonomous agent operating in the WebShop e-commerce environment.

\{skill\_context\}

Your task is to: \{task\_description\}.

Prior to this step, you have already taken \{step\_count\} step(s). Below are the most recent \{history\_length\} observations and the corresponding actions you took: \{action\_history\}

You are now at step \{current\_step\} and your current observation is: \{current\_observation\}.

Your admissible actions of the current situation are:
[
\{available\_actions\}
].

Now it's your turn to take one action for the current step.
You should first reason step-by-step about the current situation, then think carefully which admissible action best advances the shopping goal. This reasoning process MUST be enclosed within \texttt{<think> </think>} tags.
Once you've finished your reasoning, you should choose an admissible action for current step and present it within \texttt{<action> </action>} tags.
\end{templatebox}
\caption{Prompt template used by \methodname{} for the WebShop task environment.}
\label{fig:prompt_webshop}
\end{figure}

\end{document}

%% file: tables/experiment.tex
\providecommand{\methodname}{AgentOPSD}

\definecolor{bestcellcolor}{HTML}{E2D4F0}
\definecolor{secondcellcolor}{HTML}{D9EAF7}

\newcommand{\bestval}[1]{%
    \cellcolor{bestcellcolor}\textbf{#1}%
}
\newcommand{\secondval}[1]{%
    \cellcolor{secondcellcolor}\underline{#1}%
}

\begin{table*}[t]
    \centering
    \caption{
        \textbf{Performance on ALFWorld, Search-QA and WebShop.}
        We report success rate (\%) on ALFWorld, accuracy (\%) on Search-QA,
        and Score/Acc (\%) on WebShop.
        skills are training-only unless marked with $*$ (validation with skills).
        \methodname{} uses no skills at inference.
        \sethlcolor{bestcellcolor}\hl{\textbf{Best}} and
        \sethlcolor{secondcellcolor}\hl{\mbox{\underline{second-best}}} are highlighted.
    }
    \label{tab:main_results}

    \resizebox{\textwidth}{!}{%
    \begin{tabular}{l ccccccc cccccccc cc}
    \toprule
    & \multicolumn{7}{c}{\textbf{ALFWorld}}
    & \multicolumn{8}{c}{\textbf{Search-QA}}
    & \multicolumn{2}{c}{\textbf{WebShop}} \\
    \cmidrule(lr){2-8}
    \cmidrule(lr){9-16}
    \cmidrule(lr){17-18}

    \textbf{Method}
    & \textbf{Pick}
    & \textbf{Look}
    & \textbf{Clean}
    & \textbf{Heat}
    & \textbf{Cool}
    & \textbf{Pick2}
    & \textbf{Avg}
    & \textbf{NQ}
    & \textbf{Triv}
    & \textbf{Pop}
    & \textbf{Hotp}
    & \textbf{2Wk}
    & \textbf{MuS}
    & \textbf{Bam}
    & \textbf{Avg}
    & \textbf{Score}
    & \textbf{Acc} \\
    \midrule

    \rowcolor{gray!10}
    \multicolumn{18}{l}{\textit{Qwen2.5-3B-Instruct}} \\

    Vanilla
        & 44.4 & 11.1 & 6.2 & 15.4 & 28.6 & 12.5 & 21.9
        & 24.6 & 48.1 & 31.0 & 26.3 & 25.3 & 7.2 & 59.7 & 31.7
        & 6.7 & 0.8
        \\

    Skill-Prompt*
        & 51.7 & 66.7 & 48.4 & 0.0 & 4.3 & 10.0 & 28.9
        & 23.7 & 46.2 & 30.6 & 24.4 & 22.1 & 7.5 & 12.5 & 23.9
        & 0.2 & 0.8
        \\

    OPSD
        & 48.8 & 41.7 & 16.7 & 0.0 & 15.8 & 16.7 & 28.1
        & 0.1 & 0.1 & 0.1 & 0.0 & 0.0 & 0.0 & 0.0 & 0.0
        & 11.3 & 3.1
        \\

    GRPO
        & 91.2
        & 62.5
        & \secondval{96.2}
        & 61.9
        & 65.0
        & 47.4
        & 75.0
        & 39.3
        & 60.6
        & 41.1
        & 37.4
        & 34.6
        & 15.4
        & 26.4
        & 36.4
        & 79.8
        & 63.3
        \\

    Skill-GRPO
        & 88.9 & 71.4 & 58.8 & \secondval{70.6} & 40.7 & 29.2 & 60.2
        & 43.5 & 58.8 & 43.0 & 36.8 & 32.2 & 11.7 & 12.5 & 34.1
        & 77.3 & 60.9
        \\

    Skill-GRPO*
        & 94.3
        & 57.1
        & \bestval{100}
        & 66.7
        & 73.1
        & 57.1
        & 80.5
        & 44.3
        & 59.6
        & 44.3
        & 39.0
        & 36.1
        & 14.5
        & 14.9
        & 36.1
        & 76.3
        & 66.4
        \\

    GRPO+OPSD
        & \bestval{100}
        & \bestval{82.4}
        & 85.7
        & \bestval{75.0}
        & 70.0
        & 60.0
        & 81.2
        & \bestval{44.9}
        & \secondval{61.2}
        & \secondval{45.2}
        & \secondval{40.4}
        & 38.5
        & \secondval{16.0}
        & 66.1
        & \secondval{44.6}
        & 77.8
        & 66.4
        \\

    Skill-SD
        & 88.2
        & 50.0
        & \secondval{96.2}
        & 52.4
        & 65.0
        & 57.9
        & 73.4
        & 44.4
        & 60.4
        & 44.0
        & 39.5
        & \secondval{40.4}
        & 15.4
        & 64.9
        & 44.1
        & 75.9
        & 64.0
        \\

    RLSD
        & 87.9
        & \secondval{75.0}
        & 90.9
        & \bestval{75.0}
        & 73.1
        & 68.4
        & 79.7
        & 41.5
        & 58.6
        & 42.3
        & \secondval{40.4}
        & 40.2
        & \bestval{16.8}
        & \secondval{66.9}
        & 43.8
        & 84.4
        & 66.4
        \\

    SDAR
        & \secondval{97.1}
        & 62.5
        & \bestval{100}
        & 61.9
        & \secondval{75.0}
        & \bestval{84.2}
        & \secondval{84.4}
        & \secondval{44.8}
        & 58.1
        & 44.3
        & 38.6
        & 36.2
        & 15.7
        & 66.1
        & 43.4
        & \secondval{85.0}
        & \secondval{68.0}
        \\

    StepOPSD
        & 82.4
        & 66.7
        & 82.6
        & 52.2
        & 73.7
        & \secondval{75.0}
        & 73.4
        & 43.6
        & \secondval{61.2}
        & 43.8
        & 39.2
        & 38.1
        & 15.8
        & 64.5
        & 43.7
        & 82.4
        & 66.4
        \\

    \textbf{\methodname{}}
        & \bestval{100}
        & 68.8
        & 86.4
        & 66.7
        & \bestval{84.6}
        & 73.7
        & \bestval{84.4}
        & \bestval{44.9}
        & \bestval{61.9}
        & \bestval{48.0}
        & \bestval{40.9}
        & \bestval{41.6}
        & 14.4
        & \bestval{68.1}
        & \bestval{46.7}
        & \bestval{90.4}
        & \bestval{69.5}
        \\

    \midrule

    \rowcolor{gray!10}
    \multicolumn{18}{l}{\textit{Qwen2.5-7B-Instruct}} \\

    Vanilla
        & 36.1 & 22.2 & 3.1 & 0.0 & 0.0 & 0.0 & 12.5
        & 25.2 & 50.8 & 29.5 & 29.0 & 29.0 & 10.4 & 63.7 & 33.9
        & 5.9 & 1.6
        \\

    Skill-Prompt*
        & 51.7 & 50.0 & 32.3 & 5.3 & 4.3 & 0.0 & 23.4
        & 30.9 & 52.1 & 32.7 & 32.7 & 27.9 & 12.7 & 66.1 & 36.4
        & 1.7 & 0.8
        \\

    OPSD
        & 50.0 & 60.0 & 22.7 & 21.4 & 17.6 & 9.5 & 32.8
        & 8.8 & 8.6 & 17.5 & 2.5 & 4.2 & 0.5 & 1.2 & 6.2
        & 4.5 & 2.3
        \\

    GRPO
        & 91.2
        & 87.5
        & 96.2
        & 81.0
        & 65.0
        & 57.9
        & 81.2
        & 45.1
        & 63.7
        & 44.0
        & 43.6
        & 43.2
        & 16.8
        & 37.6
        & 42.0
        & 80.9
        & 72.6
        \\

    Skill-GRPO
        & 88.5 & 66.7 & 65.2 & 61.1 & 57.7 & 73.1 & 69.5
        & 45.2 & 63.7 & 45.7 & 43.1 & 43.3 & 19.6 & 21.4 & 40.3
        & 80.4 & 71.9
        \\

    Skill-GRPO*
        & \bestval{100}
        & 83.3
        & \secondval{96.4}
        & 83.3
        & 75.0
        & \secondval{78.9}
        & 88.3
        & 44.8
        & 63.0
        & 45.1
        & 43.7
        & 43.7
        & 20.5
        & \secondval{71.4}
        & 47.5
        & 87.0
        & \secondval{81.2}
        \\

    GRPO+OPSD
        & 91.4
        & 61.5
        & \bestval{100}
        & 87.5
        & \secondval{76.5}
        & 52.2
        & 80.4
        & \secondval{47.3}
        & 64.5
        & 46.9
        & 43.8
        & 39.3
        & 18.0
        & 69.4
        & 47.0
        & 86.8
        & 76.5
        \\

    Skill-SD
        & 93.9
        & \bestval{93.8}
        & 90.9
        & \bestval{100}
        & 69.2
        & 68.4
        & 85.1
        & 47.1
        & 64.5
        & \secondval{47.8}
        & 44.2
        & 42.1
        & 20.2
        & 69.0
        & 47.8
        & 86.1
        & 76.5
        \\

    RLSD
        & \bestval{100}
        & 87.5
        & 92.3
        & 58.8
        & \bestval{80.0}
        & 65.2
        & 82.0
        & 46.8
        & 63.0
        & 44.4
        & \secondval{45.5}
        & \bestval{48.9}
        & \bestval{21.5}
        & \bestval{73.0}
        & \secondval{49.0}
        & 87.4
        & 77.3
        \\

    SDAR
        & 94.7
        & 75.0
        & \bestval{100}
        & 86.7
        & 68.2
        & \secondval{78.9}
        & 85.9
        & 46.3
        & 63.5
        & \bestval{48.2}
        & 43.8
        & \secondval{48.4}
        & 19.6
        & \bestval{73.0}
        & \secondval{49.0}
        & \secondval{89.4}
        & \bestval{82.8}
        \\

    StepOPSD
        & \secondval{98.1}
        & 75.0
        & \bestval{100}
        & 90.5
        & \bestval{80.0}
        & 63.2
        & \secondval{88.4}
        & 45.3
        & \secondval{64.6}
        & 45.1
        & 44.5
        & 44.4
        & 19.3
        & 69.8
        & 48.2
        & 87.2
        & 78.1
        \\

    \textbf{\methodname{}}
        & 91.2
        & \secondval{87.5}
        & \bestval{100}
        & \secondval{90.5}
        & 75.0
        & \bestval{84.2}
        & \bestval{89.1}
        & \bestval{47.5}
        & \bestval{64.9}
        & 46.8
        & \bestval{45.8}
        & 45.3
        & \secondval{20.9}
        & 70.2
        & \bestval{49.2}
        & \bestval{90.2}
        & 79.7
        \\

    \bottomrule
    \end{tabular}%
    }
\end{table*}

%% file: tables/ablation.tex
\begin{table}[t]
\centering
\begin{tabular}{llc}
\toprule
Component & Ablation & ALFWorld \\
\midrule

\rowcolor{gray!10}
\textbf{\methodname{} (full)}
& turn-level, bounded, $\lambda{=}0.5$
& 89.1 \\

\midrule

Turn-level granularity
& per-token accumulation
& 85.9 \\

Recursive state revision~\eqref{eq:belief}
& raw local gap $e_k$ in place of $\Delta B_k$
& 82.8 \\

Signed direction~\eqref{eq:credit}
& magnitude $|\Delta B_k|$ only (drop outcome sign)
& 80.5 \\

State prior $B_0$ anchor
& drop empirical-rate initialization
& 78.9 \\

\bottomrule
\end{tabular}

\caption{
\textbf{Component ablation of \methodname{}}
on ALFWorld with Qwen2.5-7B (success rate, \%).
Each row removes or replaces a single mechanism while holding all
other settings fixed. The signed direction and the state prior anchor
have the largest impact on performance, while the recursive state
revision and turn-level granularity provide smaller but consistent
improvements.
}

\label{tab:ablation}
\end{table}

%% file: sections/proof.tex
\section{Theoretical Analysis}
\label{appendix:theory}

\subsection{From the Bayes factor to the self-teacher contrast}
\label{app:bayes_approx}

\methodname{} approximates the ideal per-turn Bayes factor
\begin{equation}
\mathcal{B}_k \;=\; \log\frac{p(a_k\mid s_k,C)}{p(a_k\mid s_k,\neg C)}
\;=\; \operatorname{logit}p(C\mid s_k,a_k)-\operatorname{logit}p(C\mid s_k)
\label{eq:app_bayes}
\end{equation}
by the self-teacher contrast
\begin{equation}
e_k \;=\; \log\frac{\pi_\theta(a_k\mid s_k,c^{+})}{\pi_\theta(a_k\mid s_k)},
\label{eq:app_ek}
\end{equation}
where $C$ denotes eventual success and $\rho_k=p(C\mid s_k)$. We use two assumptions:
\textbf{(A1)} the skill-conditioned branch is success-conditional,
$\pi_\theta(a_k\mid s_k,c^{+})\approx p(a_k\mid s_k,C)$;
\textbf{(A2)} when success is rare ($\rho_k$ small) the marginal is failure-dominated,
$\pi_\theta(a_k\mid s_k)\approx p(a_k\mid s_k,\neg C)$.

The marginal action distribution is the success/failure mixture
\begin{equation}
\pi_\theta(a_k\mid s_k)=\rho_k\,p(a_k\mid s_k,C)+(1-\rho_k)\,p(a_k\mid s_k,\neg C).
\label{eq:app_mixture}
\end{equation}
Substituting \eqref{eq:app_mixture} into \eqref{eq:app_ek} under (A1),
\begin{equation}
e_k \;\approx\; \mathcal{B}_k-\log\!\big(1-\rho_k+\rho_k\,e^{\mathcal{B}_k}\big)
\;\xrightarrow[\;\rho_k\to0\;]{}\; \mathcal{B}_k ,
\label{eq:app_bias}
\end{equation}
so (A2) is the $\rho_k\to0$ limit in which the contrast recovers the Bayes factor. Under (A1) alone, $e_k$
is the pointwise mutual information
\begin{equation}
e_k \;\approx\; \log\frac{p(a_k\mid s_k,C)}{p(a_k\mid s_k)}
\;=\; \log\frac{p(C\mid s_k,a_k)}{p(C\mid s_k)} ,
\label{eq:app_pmi}
\end{equation}
positive iff $a_k$ raises the posterior success probability. The correction in \eqref{eq:app_bias} is
monotone in $\mathcal{B}_k$, hence $\operatorname{sign}(e_k)=\operatorname{sign}(\mathcal{B}_k)$ and $e_k$
preserves the ranking of turns by evidential strength; \methodname{} uses $e_k$ only through this sign and
ranking.

\subsection{Properties of the reshaping}
\label{app:reshape_props}

Let $A^{(i)}$ be the group-relative advantage, $\Delta B_k=B_k-B_{k-1}$ the per-turn belief revision, $z_k$
its within-trajectory standardization,
$m_k=\mathrm{clip}\!\big(1+b\,\operatorname{sign}(A^{(i)})z_k,\,1-b,\,1+b\big)$ with $b\in(0,1)$, and
$\tilde A_k=A^{(i)}\big((1-\lambda)+\lambda m_k\big)$ with $\lambda\in[0,1]$.

\begin{proposition}[Boundedness]\label{prop:bounded}
$\big|\tilde A_k-A^{(i)}\big|\le\lambda b\,|A^{(i)}|$, hence
$(1-\lambda b)|A^{(i)}|\le|\tilde A_k|\le(1+\lambda b)|A^{(i)}|$.
\end{proposition}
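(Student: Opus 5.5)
The plan is a direct computation from the definitions of $m_k$ and $\tilde A_k$. First I would rewrite the deviation from the unreshaped advantage by pulling out the common factor $A^{(i)}$:
\begin{equation*}
\tilde A_k-A^{(i)} \;=\; A^{(i)}\big((1-\lambda)+\lambda m_k-1\big)\;=\;\lambda\,A^{(i)}\,(m_k-1).
\end{equation*}
This reduces the first claim to bounding $|m_k-1|$. The bound should be uniform in $z_k$, $\Delta B_k$ and the sign of $A^{(i)}$.

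The key step is to use only the outer clip in the definition of $m_k$. Whatever the value of the inner argument $1+b\,\operatorname{sign}(A^{(i)})z_k$, the clip forces $m_k\in[1-b,1+b]$, so $|m_k-1|\le b$. This needs nothing about the standardization $z_k$ or the belief revisions $\Delta B_k$. The sign factor and the $\epsilon_0$ in the denominator are irrelevant here. Multiplying by $\lambda|A^{(i)}|\ge 0$, which uses $\lambda\in[0,1]$, gives $|\tilde A_k-A^{(i)}|\le\lambda b\,|A^{(i)}|$.

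For the two-sided bound, I would write $\tilde A_k=A^{(i)}\,g_k$ with $g_k=(1-\lambda)+\lambda m_k$, so that $g_k=1+\lambda(m_k-1)\in[1-\lambda b,\,1+\lambda b]$. Since $b\in(0,1)$ and $\lambda\le 1$, we have $1-\lambda b>0$. Hence $g_k>0$ and $|\tilde A_k|=g_k|A^{(i)}|$, which gives exactly $(1-\lambda b)|A^{(i)}|\le|\tilde A_k|\le(1+\lambda b)|A^{(i)}|$. Alternatively, the triangle inequality applied to the first bound yields the same two inequalities.

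There is no real obstacle. The one point worth stating explicitly is the positivity $1-\lambda b>0$, which uses $b<1$. It is what justifies $|\tilde A_k|=g_k|A^{(i)}|$ and, as a by-product, shows that the reshaping never flips the sign of the GRPO advantage, as claimed in \S\ref{sec:outcome_credit}. The degenerate case $A^{(i)}=0$ makes every bound trivially an equality.
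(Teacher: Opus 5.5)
Your proposal is correct and follows the paper's proof: both factor $\tilde A_k-A^{(i)}=\lambda A^{(i)}(m_k-1)$ and use only the outer clip to get $|m_k-1|\le b$. Your derivation of the two-sided bound, via $g_k\ge 1-\lambda b>0$ or the reverse triangle inequality, spells out the ``hence'' step that the paper leaves implicit.
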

\begin{proof}
$m_k\in[1-b,1+b]$ gives $|m_k-1|\le b$, and $\tilde A_k-A^{(i)}=A^{(i)}\lambda(m_k-1)$.
\end{proof}

\begin{proposition}[Sign Preservation]\label{prop:sign}
$\operatorname{sign}(\tilde A_k)=\operatorname{sign}(A^{(i)})$ for every turn $k$.
\end{proposition}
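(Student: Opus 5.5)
The plan is to write $\tilde A_k$ as $A^{(i)}$ times a scalar multiplier and show that this multiplier is strictly positive. From the definition,
\[
\tilde A_k = A^{(i)}\,\big((1-\lambda)+\lambda m_k\big),
\]
and since $\operatorname{sign}(xy)=\operatorname{sign}(x)\operatorname{sign}(y)$, it suffices to prove that
\[
g_k := (1-\lambda)+\lambda m_k > 0 .
\]

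The key step uses the clip in the definition of $m_k$. Whatever the value of $z_k$, and whatever the sign factor $\operatorname{sign}(A^{(i)})$ inside it, the clip forces $m_k\in[1-b,1+b]$. This is the same fact used in Proposition~\ref{prop:bounded}. Because $b\in(0,1)$, we get $m_k\ge 1-b>0$.

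The multiplier $g_k$ is a convex combination of $1$ and $m_k$ with weight $\lambda\in[0,1]$. Hence
\[
g_k \ge (1-\lambda)\cdot 1+\lambda(1-b) = 1-\lambda b \ge 1-b > 0 .
\]

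Alternatively, I can invoke Proposition~\ref{prop:bounded} directly. It gives $|\tilde A_k - A^{(i)}|\le \lambda b|A^{(i)}| < |A^{(i)}|$ whenever $A^{(i)}\neq 0$. A perturbation strictly smaller in magnitude than $|A^{(i)}|$ cannot cross zero, so the sign is preserved.

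The only delicate point is the degenerate case $A^{(i)}=0$, which occurs when all group rewards are equal. In that case $\tilde A_k=0$ as well, so both signs equal $0$ under the convention $\operatorname{sign}(0)=0$, and the claim still holds.

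I do not expect any real obstacle. The one thing to note explicitly is that the strict inequality $b<1$ is what makes $g_k$ strictly positive. If $b=1$ and $\lambda=1$ were allowed, then $g_k$ could reach $0$, and the sign would collapse to $0$ instead of being preserved.
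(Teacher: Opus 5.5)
Your proof is correct and follows the paper's argument: the paper likewise bounds the multiplier as $(1-\lambda)+\lambda m_k \ge 1-\lambda b > 0$, using $m_k \ge 1-b$ from the clip together with $\lambda\le 1$ and $b<1$, and concludes that a strictly positive factor preserves the sign. Your treatment of the degenerate case $A^{(i)}=0$ and the alternative route through Proposition~\ref{prop:bounded} are valid but not needed.
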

\begin{proof}
$(1-\lambda)+\lambda m_k\ge 1-\lambda b>0$ since $\lambda\le1,\,b<1$; a strictly positive factor preserves sign.
\end{proof}

\begin{proposition}[Recovery of GRPO]\label{prop:recover}
At $\lambda=0$, $\tilde A_k=A^{(i)}$ for every token and the \methodname{} gradient equals the GRPO gradient.
\end{proposition}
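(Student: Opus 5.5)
Set $\lambda=0$ in the definition of $\tilde A_k$ and check that the per-token advantage is unchanged, so the objective in Eq.~\eqref{eq:agentopsd_objective} coincides term by term with the GRPO objective.

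First, from $\tilde A_k=A^{(i)}\big((1-\lambda)+\lambda m_k\big)$, at $\lambda=0$ the bracket equals $1+0\cdot m_k$. Since $m_k$ is finite this is exactly $1$. Finiteness holds because $m_k=\mathrm{clip}(\cdot,1-b,1+b)\in[1-b,1+b]$, and $z_k$ is finite thanks to the $\epsilon_0$ in its denominator; this is the same bound used in Proposition~\ref{prop:bounded}. Hence $\tilde A_k=A^{(i)}$ for every turn $k$, and every token $t$ inherits $\tilde A_{\kappa_i(t)}^{(i)}=A^{(i)}=A^{(i)}_{\mathrm{seq}}$ from Eq.~\eqref{eq:grpo_adv}. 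Equivalently, Proposition~\ref{prop:bounded} already gives $|\tilde A_k-A^{(i)}|\le \lambda b|A^{(i)}|=0$.

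Next, substitute this into $\mathcal{L}_{\methodname}$. The masks $M_{i,t}$, the ratios $r_{i,t}$, the clipping radius $\varepsilon$ and the KL term $\beta\mathcal{L}_{\mathrm{KL}}$ are all defined exactly as in GRPO. Only the advantage differs, and we have just shown it is now identical, so $\mathcal{L}_{\methodname}(\theta)=\mathcal{L}_{\mathrm{GRPO}}(\theta)$ as functions of $\theta$.

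It remains to pass from equal losses to equal gradients. This is the only subtle point: $\tilde A$ depends on $\theta$ through the teacher log-probabilities $\delta_{k,t}$ and through $B_k$, so I must rule out any extra gradient path. Two facts handle this. First, the paper states that the self-teacher signal is detached and acts only through $\widetilde A$, so $\nabla_\theta\tilde A_k=0$ by construction. Second, even without detaching, at $\lambda=0$ the map $\theta\mapsto\tilde A_k$ equals $A^{(i)}$ identically, and $A^{(i)}$ depends only on rewards. Either fact gives $\nabla_\theta\mathcal{L}_{\methodname}=\nabla_\theta\mathcal{L}_{\mathrm{GRPO}}$. One caveat remains: a clipped $m_k$ is nondifferentiable at the clip boundaries. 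This does not matter, because $m_k$ is multiplied by the constant $\lambda=0$ and the detach already applies.
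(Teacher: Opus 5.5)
Your proof is correct and follows the same route as the paper: setting $\lambda=0$ makes the factor $(1-\lambda)+\lambda m_k$ identically $1$, so $\tilde A_k=A^{(i)}$ regardless of the belief signal. Your remarks on the finiteness of $m_k$ and on why equal losses give equal gradients (either because $\tilde A_k$ is detached, or because it is constant in $\theta$ at $\lambda=0$) spell out what the paper's one-line proof leaves implicit.
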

\begin{proof}
$\lambda=0$ gives $(1-\lambda)+\lambda m_k=1$, so $\tilde A_k=A^{(i)}$ identically, independent of the belief signal.
\end{proof}

\begin{proposition}[First-Order Decomposition of the Belief Revision]\label{prop:decomp}
For $c_k=\gamma c_{k-1}+e_k$, $\ell_k=\operatorname{logit}(B_0)+c_k$, and $\Delta\ell_k=e_k-(1-\gamma)c_{k-1}$,
\begin{equation}
\Delta B_k=B_{k-1}(1-B_{k-1})\,\Delta\ell_k+O\!\big((\Delta\ell_k)^2\big).
\end{equation}
\end{proposition}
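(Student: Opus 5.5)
The plan is to verify the increment identity directly and then Taylor-expand the logistic function around the previous log-odds.

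First, from the recursion $c_k=\gamma c_{k-1}+e_k$ and $\ell_k=\operatorname{logit}(B_0)+c_k$, the prior term cancels in the difference. This gives
\[
\ell_k-\ell_{k-1}=c_k-c_{k-1}=e_k-(1-\gamma)c_{k-1}=\Delta\ell_k .
\]
So the statement's $\Delta\ell_k$ is exactly the log-odds increment, and $B_k=\sigma(\ell_{k-1}+\Delta\ell_k)$.

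Second, I would expand $\sigma$ around $\ell_{k-1}$ using Taylor's theorem with Lagrange remainder:
\[
\sigma(\ell_{k-1}+\Delta\ell_k)=\sigma(\ell_{k-1})+\sigma'(\ell_{k-1})\Delta\ell_k+\tfrac12\sigma''(\xi)(\Delta\ell_k)^2
\]
for some $\xi$ between $\ell_{k-1}$ and $\ell_k$. The identity $\sigma'(u)=\sigma(u)(1-\sigma(u))$ gives $\sigma'(\ell_{k-1})=B_{k-1}(1-B_{k-1})$. Subtracting $B_{k-1}=\sigma(\ell_{k-1})$ then yields
\[
\Delta B_k=B_{k-1}(1-B_{k-1})\,\Delta\ell_k+R_k .
\]

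The remaining step is to show $R_k=O((\Delta\ell_k)^2)$ with a constant that does not depend on $\ell_{k-1}$. We have
\[
\sigma''(u)=\sigma(u)(1-\sigma(u))(1-2\sigma(u)).
\]
Since $\sigma\in(0,1)$, this gives $|\sigma''(u)|\le \tfrac14$ everywhere; a sharper bound of $1/(6\sqrt3)$ is available but unnecessary. Hence $|R_k|\le\tfrac18(\Delta\ell_k)^2$ uniformly in $\ell_{k-1}$, $B_0$, $\gamma$, and the past evidence.

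This uniformity is the only step needing care. Because $B_0$ may sit near the clipping bounds $\epsilon_0$ or $1-\epsilon_0$, $\ell_{k-1}$ can be large in magnitude, so the constant in the big-$O$ must not depend on where the expansion is centered. The global bound on $\sigma''$ settles this, and the argument also covers the case $\gamma=1$, where $\Delta\ell_k=e_k$.
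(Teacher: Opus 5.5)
Your proposal is correct and follows the paper's proof: a first-order Taylor expansion of $\sigma$ around $\ell_{k-1}$ using $\sigma'=\sigma(1-\sigma)$. You also check explicitly that $\Delta\ell_k=\ell_k-\ell_{k-1}$, and you give a remainder bound $|R_k|\le\tfrac18(\Delta\ell_k)^2$ that holds uniformly in $\ell_{k-1}$; the paper leaves both implicit in its $O(\cdot)$.
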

\begin{proof}
$B_k=\sigma(\ell_k)$ with $\sigma'=\sigma(1-\sigma)$; a first-order expansion around $\ell_{k-1}$ gives
$B_k=B_{k-1}+B_{k-1}(1-B_{k-1})\Delta\ell_k+O((\Delta\ell_k)^2)$. The gate $B(1-B)$ is maximal at $B=\tfrac12$
and vanishes as $B\to\{0,1\}$.
\end{proof}

\begin{proposition}[Exact Budget of the Idealized Recursion]\label{prop:budget}
The idealized increments telescope to the endpoint change, $\sum_k\Delta B_k=B_K-B_0$.
\end{proposition}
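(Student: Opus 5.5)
The argument is a telescoping sum, and it uses no approximation. The word "idealized" means that each $\Delta B_k$ is taken to be the exact difference $B_k-B_{k-1}=\sigma(\ell_k)-\sigma(\ell_{k-1})$ from Eq.~\eqref{eq:belief_revision}. It is not the first-order surrogate $B_{k-1}(1-B_{k-1})\Delta\ell_k$ of Proposition~\ref{prop:decomp}.

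First, I fix the indexing. The recursion in Eq.~\eqref{eq:belief} starts at $c_0=0$, so $\ell_0=\operatorname{logit}(B_0)$. Because of the clipping $B_0\in[\epsilon_0,1-\epsilon_0]$, this log-odds is finite. We then have $\sigma(\ell_0)=B_0$, so the $k=1$ increment $\Delta B_1=B_1-B_0$ is well defined and consistent with the definition of $B_0$ as the prior. For a trajectory of $K$ turns,
\begin{equation*}
\sum_{k=1}^{K}\Delta B_k=\sum_{k=1}^{K}\big(B_k-B_{k-1}\big)=\sum_{k=1}^{K}B_k-\sum_{k=0}^{K-1}B_k=B_K-B_0 .
\end{equation*}
All intermediate terms cancel, and this holds for every $\gamma\in(0,1]$ and every realization of the evidence $e_k$. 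The decay factor enters only through the values of the $\ell_k$, and it is irrelevant to the telescoping.

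The only point needing care is the identification $\sigma(\operatorname{logit}(B_0))=B_0$. This holds because $\sigma$ and $\operatorname{logit}$ are inverse bijections between $\mathbb{R}$ and $(0,1)$, and the clipping keeps $B_0$ in the open interval. I will also remark that $\sum_k\Delta B_k\in(-1,1)$, because $B_K,B_0\in(0,1)$. This bounds the total credit budget per trajectory.

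Finally, I will note what breaks under the first-order approximation. The sum of the linearized terms differs from $B_K-B_0$ by $\sum_k O((\Delta\ell_k)^2)$. The exact budget therefore holds only for the idealized increments, which explains the qualifier in the statement.
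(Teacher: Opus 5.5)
Your proposal is correct and is the same one-line telescoping argument the paper gives, $\sum_{k=1}^{K}(B_k-B_{k-1})=B_K-B_0$. Your extra checks are sound but not needed for the identity itself: that $\sigma(\ell_0)=B_0$ with $\ell_0=\operatorname{logit}(B_0)$ finite thanks to the clip, and that the exact budget fails for the first-order surrogate.
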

\begin{proof}
$\sum_{k=1}^{K}(B_k-B_{k-1})=B_K-B_0$.
\end{proof}

\begin{proposition}[Non-Identifiability of Per-Turn Contribution]\label{prop:unidentifiable}
There exist two trajectories with identical outcome reward---hence identical broadcast advantage---whose
per-turn contributions differ; per-turn credit is not identifiable from the trajectory return alone.
\end{proposition}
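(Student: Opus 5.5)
The plan is to give an explicit counterexample, since the statement is existential. I first fix what "per-turn contribution" means, using the paper's own notion: the contribution of turn $k$ is the ideal Bayesian revision of Eq.~\eqref{eq:app_bayes}, namely $\mathcal{B}_k=\operatorname{logit}p(C\mid s_k,a_k)-\operatorname{logit}p(C\mid s_k)$. Equivalently, I can use the probability-scale increment $p(C\mid s_k,a_k)-p(C\mid s_k)$. Either choice is a function of the environment's success probabilities along the trajectory, not of the terminal reward. This is the fact the construction exploits.

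I will build a small deterministic-transition, two-turn environment ($K=2$) with binary outcome. Two trajectories $\tau$ and $\tau'$ both end in success, so $R(\tau)=R(\tau')=1$. They are placed in the same GRPO group, so Eq.~\eqref{eq:grpo_adv} assigns them the identical $A_{\mathrm{seq}}$ at every token.

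The success probabilities are chosen as follows:
\begin{itemize}
\item In $\tau$, the first action raises the success probability from $\rho_1=\tfrac12$ to $1$, and the second action is neutral, so $p(C\mid s_2,a_2)=p(C\mid s_2)=1$.
\item In $\tau'$, the first action is neutral, keeping the probability at $\tfrac12$, and the second action raises it to $1$.
\end{itemize}
A concrete realization is a task whose success hinges on one pivotal choice. That choice is available at turn $1$ in $\tau$ and at turn $2$ in $\tau'$, with the other turn being a no-op.

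Then the probability-scale contributions are $(\tfrac12,0)$ for $\tau$ and $(0,\tfrac12)$ for $\tau'$, which differ. The two trajectories nevertheless share the same return and the same broadcast advantage. Hence no function of the trajectory return alone can recover per-turn credit, which is the non-identifiability claim. Both vectors sum to the same endpoint change $B_K-B_0$, consistent with Prop.~\ref{prop:budget}. This telescoping identity explains why the return pins down only the total, not its allocation.

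The only delicate step is a technicality of the logit formulation: probabilities of exactly $1$ give infinite log-odds. I will avoid this by using success probabilities $1-\epsilon$ in place of $1$, or by stating the counterexample on the probability scale. Any $\epsilon\in(0,\tfrac12)$ keeps the two contribution vectors distinct. For a symmetric failure-side example, I can mirror the construction with both trajectories failing.
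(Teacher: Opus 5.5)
Your proposal is correct and follows the paper's route. Both arguments build a pair of equal-reward trajectories in one group whose per-turn contributions are allocated differently: you move the pivotal turn, while the paper contrasts a concentrated $\Delta B$ with evenly spread progress. Your explicit two-turn instance with contributions $(\tfrac12,0)$ versus $(0,\tfrac12)$ is a more concrete version of the paper's sketch.

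One small wrinkle: your $1-\epsilon$ fix for the logit scale needs a stochastic final outcome. Under your deterministic transitions, $p(C\mid s_K,a_K)\in\{0,1\}$, so the probability-scale version is the one to keep.
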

\begin{proof}[Proof (by construction)]
Take $\tau_1,\tau_2$ in one group with $R(\tau_1)=R(\tau_2)$, so $A(\tau_1)=A(\tau_2)$ and GRPO assigns the
same scalar to every turn. Let $\tau_1$ succeed through a single decisive turn ($\Delta B$ concentrated) and
$\tau_2$ through evenly spread progress. The returns coincide but the per-turn contributions differ, so an
additional per-turn signal is required to recover them.
\end{proof}

\begin{proposition}[$B_0$ as the Group Success-Rate Estimate]\label{prop:v0}
For a task with success probability $\theta_x$ and a group of $G$ trajectories yielding $S$ successes under
a binary reward, the maximum-likelihood estimate of $\theta_x$ is the group success fraction $\bar R=S/G$,
which is the standard GRPO group mean; \methodname{} sets $B_0=\operatorname{clip}(\bar R,\epsilon_0,1-\epsilon_0)$.
\end{proposition}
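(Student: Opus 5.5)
The plan is to model the $G$ rollouts for task $x$ as i.i.d.\ Bernoulli draws $R^{(1)},\dots,R^{(G)}\in\{0,1\}$ with common success probability $\theta_x$. This is justified because each rollout is sampled independently from the same policy $\pi_\theta$ given the same task, and the reward is binary. With $S=\sum_j R^{(j)}$, the likelihood is
\begin{equation}
L(\theta_x)=\prod_{j=1}^{G}\theta_x^{R^{(j)}}(1-\theta_x)^{1-R^{(j)}}=\theta_x^{S}(1-\theta_x)^{G-S}.
\end{equation}
I would then maximize the log-likelihood $S\log\theta_x+(G-S)\log(1-\theta_x)$ over $\theta_x\in[0,1]$.

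I would treat three cases.
\begin{itemize}
\item For $0<S<G$, setting the derivative $S/\theta_x-(G-S)/(1-\theta_x)$ to zero gives the unique stationary point $\hat\theta_x=S/G$. The log-likelihood is strictly concave, since its second derivative $-S/\theta_x^2-(G-S)/(1-\theta_x)^2$ is negative, so this stationary point is the global maximizer.
\item For $S=0$, the likelihood $(1-\theta_x)^G$ is decreasing, so it is maximized at $\theta_x=0=S/G$.
\item For $S=G$, the likelihood $\theta_x^G$ is increasing, so it is maximized at $\theta_x=1=S/G$.
\end{itemize}
In every case, therefore, $\hat\theta_x=S/G$.

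Next I would identify this estimate with the GRPO baseline. Since $R^{(j)}\in\{0,1\}$, we have $\sum_j R^{(j)}=S$, so $\bar R=\frac1G\sum_j R^{(j)}=S/G$, which is exactly the group mean in Eq.~\eqref{eq:grpo_adv}. The statement's last clause, that \methodname{} sets $B_0=\operatorname{clip}(\bar R,\epsilon_0,1-\epsilon_0)$, is simply the definition in Eq.~\eqref{eq:belief}. The only thing worth remarking is why clipping is needed. In the boundary cases $S\in\{0,G\}$ the estimate lies at $\{0,1\}$, where the logit is infinite. Clipping maps the estimate into the interior, so $\operatorname{logit}(B_0)$ stays finite, and it leaves the value unchanged whenever $\epsilon_0\le S/G\le 1-\epsilon_0$.

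I expect no real obstacle here. The most delicate step is the modeling assumption itself: the $G$ rollouts must be independent with a common $\theta_x$ for the task. This holds because they are sampled from the same rollout policy $\pi_{\theta_{\mathrm{old}}}$ on the same task.
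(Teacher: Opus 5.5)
Your proposal is correct and follows the same route as the paper: a binomial (equivalently i.i.d.\ Bernoulli) likelihood maximized at $S/G$, identified with the GRPO group mean, with the clip serving only to keep $\operatorname{logit}(B_0)$ finite. The paper cites the binomial MLE without your three-case analysis, and it adds one remark you could include: the boundary cases $S\in\{0,G\}$ that require the clip have zero group-relative advantage, so the clipped prior never influences the update.
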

\begin{proof}
Under a Binomial$(G,\theta_x)$ likelihood the MLE is $S/G$. The clip ($\epsilon_0{=}10^{-4}$) only keeps
$\operatorname{logit}(B_0)$ finite for all-correct or all-wrong groups, which have zero group-relative
advantage and hence do not contribute to the update.
\end{proof}

%% file: sections/algorithm.tex
\section{Algorithm}
\label{appendix:algorithm}
We give pseudocode for one \methodname{} training iteration at turn-level granularity in Algorithm~\ref{alg:sabre}. The
only addition over GRPO is a single teacher forward pass per turn and the per-turn belief reshaping block;
everything else is the standard group-relative update.

\begin{algorithm}[h]
\caption{\methodname{}: Recursive State Updates for Turn-Level Credit}
\label{alg:sabre}
\begin{algorithmic}[1]
\Require policy $\pi_\theta$, verifier $R$, group size $G$, skill retriever; mixing $\lambda$, bound $b$, evidence decay $\gamma$
\For{each training iteration}
  \State sample a batch of tasks $\{x\}$
  \For{each task $x$ with retrieved skill $c^{+}$}
    \State sample $G$ trajectories $\{y^{(1)},\dots,y^{(G)}\}\sim\pi_\theta(\cdot\mid x)$; trajectory $i$ has $K_i$ turns \Comment{on-policy rollout}
    \For{$i=1,\dots,G$}
      \State obtain reward $R^{(i)}=R(x,y^{(i)})\in\{0,1\}$ from the verifier
    \EndFor
    \State $A_{\mathrm{seq}}^{(i)}\gets (R^{(i)}-\mu_G)/\sigma_G$ \Comment{group-relative advantage}
    \For{$i=1,\dots,G$}
      \State $B_0\gets\mathrm{clip}(\bar R,\epsilon_B,1-\epsilon_B)$;\quad $\ell_0\gets\mathrm{logit}(B_0)$ \Comment{standard GRPO group success rate $\bar R{=}S/G$ (Prop.~\ref{prop:v0})}
      \State $c_0\gets 0$
      \For{$k=1,\dots,K_i$} \Comment{per-turn belief state}
        \State $e_k\gets\sum_t\mathrm{sg}[\log\pi_\theta(y_{k,t}\mid s_k^{+})-\log\pi_\theta(y_{k,t}\mid s_k)]$ \Comment{one extra teacher forward}
        \State $c_k\gets\gamma\,c_{k-1}+e_k$;\quad $\ell_k\gets\ell_0+c_k$;\quad $B_k\gets\sigma(\ell_k)$;\quad $\Delta B_k\gets B_k-B_{k-1}$
      \EndFor
      \State $q_k\gets\mathrm{sign}(A_{\mathrm{seq}}^{(i)})\,\Delta B_k$ for all $k$ \Comment{outcome-aligned credit}
      \State $z_k\gets (q_k-\mathrm{mean}(q))/(\mathrm{std}(q)+\epsilon)$ \Comment{within-trajectory standardization}
      \State $w_k\gets\mathrm{clip}(1+b\,z_k,\,1-b,\,1+b)$ \Comment{bounded multiplier}
      \State $\widetilde A_k^{(i)}\gets A_{\mathrm{seq}}^{(i)}\big[(1-\lambda)+\lambda w_k\big]$; each token inherits $\widetilde A$ of its turn
    \EndFor
  \EndFor
  \State update $\theta$ by maximizing the clipped GRPO objective $\mathcal{L}_{\methodname}(\theta)$ with $\{\widetilde A\}$ \Comment{policy update}
\EndFor
\end{algorithmic}
\end{algorithm}

\paragraph{Token-level variant.} Replace the per-turn recursion with the same recursion over the flattened token
sequence under a response mask: accumulate $\delta_t$ directly, standardize $\Delta B_t$ over the
trajectory's tokens, and assign $\widetilde A_t$ per token. This is the granularity ablation reported in
Table~\ref{tab:ablation}.

\paragraph{Cost.} The overhead over GRPO is one teacher forward pass per trajectory; the
belief reshaping block is elementwise and adds no rollouts and no learned parameters.

%% file: colm2026_conference.bib
@misc{wang2026tcod,
      title={TCOD: Exploring Temporal Curriculum in On-Policy Distillation for Multi-turn Autonomous Agents}, 
      author={Jiaqi Wang and Wenhao Zhang and Weijie Shi and Yaliang Li and James Cheng},
      year={2026},
      eprint={2604.24005},
      archivePrefix={arXiv},
      primaryClass={cs.LG},
      url={https://arxiv.org/abs/2604.24005}, 
}

@misc{yang2026rlsd,
      title={Self-Distilled RLVR}, 
      author={Chenxu Yang and Chuanyu Qin and Qingyi Si and Minghui Chen and Naibin Gu and Dingyu Yao and Zheng Lin and Weiping Wang and Jiaqi Wang and Nan Duan},
      year={2026},
      eprint={2604.03128},
      archivePrefix={arXiv},
      primaryClass={cs.LG},
      url={https://arxiv.org/abs/2604.03128}, 
}

@misc{xia2026skillrl,
      title={SkillRL: Evolving Agents via Recursive Skill-Augmented Reinforcement Learning}, 
      author={Peng Xia and Jianwen Chen and Hanyang Wang and Jiaqi Liu and Kaide Zeng and Yu Wang and Siwei Han and Yiyang Zhou and Xujiang Zhao and Haifeng Chen and Zeyu Zheng and Cihang Xie and Huaxiu Yao},
      year={2026},
      eprint={2602.08234},
      archivePrefix={arXiv},
      primaryClass={cs.LG},
      url={https://arxiv.org/abs/2602.08234}, 
}

@misc{wang2026skillsd,
      title={Skill-SD: Skill-Conditioned Self-Distillation for Multi-turn LLM Agents}, 
      author={Hao Wang and Guozhi Wang and Han Xiao and Yufeng Zhou and Yue Pan and Jichao Wang and Ke Xu and Yafei Wen and Xiaohu Ruan and Xiaoxin Chen and Honggang Qi},
      year={2026},
      eprint={2604.10674},
      archivePrefix={arXiv},
      primaryClass={cs.LG},
      url={https://arxiv.org/abs/2604.10674}, 
}

@misc{lu2026skill0,
      title={SKILL0: In-Context Agentic Reinforcement Learning for Skill Internalization}, 
      author={Zhengxi Lu and Zhiyuan Yao and Jinyang Wu and Chengcheng Han and Qi Gu and Xunliang Cai and Weiming Lu and Jun Xiao and Yueting Zhuang and Yongliang Shen},
      year={2026},
      eprint={2604.02268},
      archivePrefix={arXiv},
      primaryClass={cs.LG},
      url={https://arxiv.org/abs/2604.02268}, 
}

@misc{xu2026tip,
      title={TIP: Token Importance in On-Policy Distillation}, 
      author={Yuanda Xu and Hejian Sang and Zhengze Zhou and Ran He and Zhipeng Wang and Alborz Geramifard},
      year={2026},
      eprint={2604.14084},
      archivePrefix={arXiv},
      primaryClass={cs.LG},
      url={https://arxiv.org/abs/2604.14084}, 
}

@article{yao2022webshop,
  title={Webshop: Towards scalable real-world web interaction with grounded language agents},
  author={Yao, Shunyu and Chen, Howard and Yang, John and Narasimhan, Karthik},
  journal={Advances in Neural Information Processing Systems},
  volume={35},
  pages={20744--20757},
  year={2022}
}

@article{shridhar2020alfworld,
  title={Alfworld: Aligning text and embodied environments for interactive learning},
  author={Shridhar, Mohit and Yuan, Xingdi and C{\^o}t{\'e}, Marc-Alexandre and Bisk, Yonatan and Trischler, Adam and Hausknecht, Matthew},
  journal={arXiv preprint arXiv:2010.03768},
  year={2020}
}

@article{jin2025searchr1,
  title={Search-r1: Training llms to reason and leverage search engines with reinforcement learning},
  author={Jin, Bowen and Zeng, Hansi and Yue, Zhenrui and Yoon, Jinsung and Arik, Sercan and Wang, Dong and Zamani, Hamed and Han, Jiawei},
  journal={arXiv preprint arXiv:2503.09516},
  year={2025}
}

@article{kwiatkowski2019nq,
  title={Natural questions: a benchmark for question answering research},
  author={Kwiatkowski, Tom and Palomaki, Jennimaria and Redfield, Olivia and Collins, Michael and Parikh, Ankur and Alberti, Chris and Epstein, Danielle and Polosukhin, Illia and Devlin, Jacob and Lee, Kenton and others},
  journal={Transactions of the Association for Computational Linguistics},
  volume={7},
  pages={453--466},
  year={2019},
  publisher={MIT Press One Rogers Street, Cambridge, MA 02142-1209, USA journals-info~…}
}

@inproceedings{joshi2017triviaqa,
  title={Triviaqa: A large scale distantly supervised challenge dataset for reading comprehension},
  author={Joshi, Mandar and Choi, Eunsol and Weld, Daniel S and Zettlemoyer, Luke},
  booktitle={Proceedings of the 55th Annual Meeting of the Association for Computational Linguistics (Volume 1: Long Papers)},
  pages={1601--1611},
  year={2017}
}

@inproceedings{mallen2023popqa,
  title={When not to trust language models: Investigating effectiveness of parametric and non-parametric memories},
  author={Mallen, Alex and Asai, Akari and Zhong, Victor and Das, Rajarshi and Khashabi, Daniel and Hajishirzi, Hannaneh},
  booktitle={Proceedings of the 61st annual meeting of the association for computational linguistics (volume 1: Long papers)},
  pages={9802--9822},
  year={2023}
}

@inproceedings{yang2018hotpotqa,
  title={HotpotQA: A dataset for diverse, explainable multi-hop question answering},
  author={Yang, Zhilin and Qi, Peng and Zhang, Saizheng and Bengio, Yoshua and Cohen, William and Salakhutdinov, Ruslan and Manning, Christopher D},
  booktitle={Proceedings of the 2018 conference on empirical methods in natural language processing},
  pages={2369--2380},
  year={2018}
}

@inproceedings{ho20202wiki,
  title={Constructing a multi-hop qa dataset for comprehensive evaluation of reasoning steps},
  author={Ho, Xanh and Nguyen, Anh-Khoa Duong and Sugawara, Saku and Aizawa, Akiko},
  booktitle={Proceedings of the 28th International Conference on Computational Linguistics},
  pages={6609--6625},
  year={2020}
}

@article{trivedi2022musique,
  title={MuSiQue: Multi-hop Questions via Single-hop Question Composition},
  author={Trivedi, Harsh and Balasubramanian, Niranjan and Khot, Tushar and Sabharwal, Ashish},
  journal={Transactions of the Association for Computational Linguistics},
  volume={10},
  pages={539--554},
  year={2022}
}

@inproceedings{press2023bamboogle,
  title={Measuring and narrowing the compositionality gap in language models},
  author={Press, Ofir and Zhang, Muru and Min, Sewon and Schmidt, Ludwig and Smith, Noah A and Lewis, Mike},
  booktitle={Findings of the Association for Computational Linguistics: EMNLP 2023},
  pages={5687--5711},
  year={2023}
}

@article{feng2025gigpo,
  title={Group-in-group policy optimization for llm agent training},
  author={Feng, Lang and Xue, Zhenghai and Liu, Tingcong and An, Bo},
  journal={arXiv preprint arXiv:2505.10978},
  year={2025}
}

@article{wang2022e5,
  title={Text embeddings by weakly-supervised contrastive pre-training},
  author={Wang, Liang and Yang, Nan and Huang, Xiaolong and Jiao, Binxing and Yang, Linjun and Jiang, Daxin and Majumder, Rangan and Wei, Furu},
  journal={arXiv preprint arXiv:2212.03533},
  year={2022}
}

@article{guo2025ds-r1,
  title={Deepseek-r1: Incentivizing reasoning capability in llms via reinforcement learning},
  author={Guo, Daya and Yang, Dejian and Zhang, Haowei and Song, Junxiao and Wang, Peiyi and Zhu, Qihao and Xu, Runxin and Zhang, Ruoyu and Ma, Shirong and Bi, Xiao and others},
  journal={arXiv preprint arXiv:2501.12948},
  year={2025}
}

@article{shao2024deepseekmath,
  title={Deepseekmath: Pushing the limits of mathematical reasoning in open language models},
  author={Shao, Zhihong and Wang, Peiyi and Zhu, Qihao and Xu, Runxin and Song, Junxiao and Bi, Xiao and Zhang, Haowei and Zhang, Mingchuan and Li, YK and Wu, Yang and others},
  journal={arXiv preprint arXiv:2402.03300},
  year={2024}
}

@misc{zhao2026opsd,
      title={Self-Distilled Reasoner: On-Policy Self-Distillation for Large Language Models}, 
      author={Siyan Zhao and Zhihui Xie and Mengchen Liu and Jing Huang and Guan Pang and Feiyu Chen and Aditya Grover},
      year={2026},
      eprint={2601.18734},
      archivePrefix={arXiv},
      primaryClass={cs.LG},
      url={https://arxiv.org/abs/2601.18734}, 
}

@article{yang2025qwen3,
  title={Qwen3 technical report},
  author={Yang, An and Li, Anfeng and Yang, Baosong and Zhang, Beichen and Hui, Binyuan and Zheng, Bo and Yu, Bowen and Gao, Chang and Huang, Chengen and Lv, Chenxu and others},
  journal={arXiv preprint arXiv:2505.09388},
  year={2025}
}

@article{team2025kimi,
  title={Kimi k2: Open agentic intelligence},
  author={Team, Kimi and Bai, Yifan and Bao, Yiping and Charles, Y and Chen, Cheng and Chen, Guanduo and Chen, Haiting and Chen, Huarong and Chen, Jiahao and Chen, Ningxin and others},
  journal={arXiv preprint arXiv:2507.20534},
  year={2025}
}

@inproceedings{lu2026uir1,
  title={Ui-r1: Enhancing efficient action prediction of gui agents by reinforcement learning},
  author={Lu, Zhengxi and Chai, Yuxiang and Guo, Yaxuan and Yin, Xi and Liu, Liang and Wang, Hao and Xiao, Han and Ren, Shuai and Zhao, Pengxiang and Liu, Guangyi and others},
  booktitle={Proceedings of the AAAI Conference on Artificial Intelligence},
  volume={40},
  number={21},
  pages={17608--17616},
  year={2026}
}

@inproceedings{shi2025toollearning,
  title={Tool learning in the wild: Empowering language models as automatic tool agents},
  author={Shi, Zhengliang and Gao, Shen and Yan, Lingyong and Feng, Yue and Chen, Xiuyi and Chen, Zhumin and Yin, Dawei and Verberne, Suzan and Ren, Zhaochun},
  booktitle={Proceedings of the ACM on Web Conference 2025},
  pages={2222--2237},
  year={2025}
}

@article{comanici2025gemini,
  title={Gemini 2.5: Pushing the frontier with advanced reasoning, multimodality, long context, and next generation agentic capabilities},
  author={Comanici, Gheorghe and Bieber, Eric and Schaekermann, Mike and Pasupat, Ice and Sachdeva, Noveen and Dhillon, Inderjit and Blistein, Marcel and Ram, Ori and Zhang, Dan and Rosen, Evan and others},
  journal={arXiv preprint arXiv:2507.06261},
  year={2025}
}

@article{team2026longcat-2601,
  title={Longcat-flash-thinking-2601 technical report},
  author={Team, Meituan LongCat and Gui, Anchun and Li, Bei and Tao, Bingyang and Zhou, Bole and Chen, Borun and Zhang, Chao and Gao, Chen and Zhang, Chen and Han, Chengcheng and others},
  journal={arXiv preprint arXiv:2601.16725},
  year={2026}
}

@article{shen2023hugginggpt,
  title={Hugginggpt: Solving ai tasks with chatgpt and its friends in hugging face},
  author={Shen, Yongliang and Song, Kaitao and Tan, Xu and Li, Dongsheng and Lu, Weiming and Zhuang, Yueting},
  journal={Advances in Neural Information Processing Systems},
  volume={36},
  pages={38154--38180},
  year={2023}
}

@article{jimenez2023swebench,
  title={Swe-bench: Can language models resolve real-world github issues?},
  author={Jimenez, Carlos E and Yang, John and Wettig, Alexander and Yao, Shunyu and Pei, Kexin and Press, Ofir and Narasimhan, Karthik},
  journal={arXiv preprint arXiv:2310.06770},
  year={2023}
}

@article{dong2025arpo,
  title={Agentic reinforced policy optimization},
  author={Dong, Guanting and Mao, Hangyu and Ma, Kai and Bao, Licheng and Chen, Yifei and Wang, Zhongyuan and Chen, Zhongxia and Du, Jiazhen and Wang, Huiyang and Zhang, Fuzheng and others},
  journal={arXiv preprint arXiv:2507.19849},
  year={2025}
}

@misc{ye2026opcd,
      title={On-Policy Context Distillation for Language Models}, 
      author={Tianzhu Ye and Li Dong and Xun Wu and Shaohan Huang and Furu Wei},
      year={2026},
      eprint={2602.12275},
      archivePrefix={arXiv},
      primaryClass={cs.CL},
      url={https://arxiv.org/abs/2602.12275}, 
}

@misc{yang2026g-opd,
      title={Learning beyond Teacher: Generalized On-Policy Distillation with Reward Extrapolation}, 
      author={Wenkai Yang and Weijie Liu and Ruobing Xie and Kai Yang and Saiyong Yang and Yankai Lin},
      year={2026},
      eprint={2602.12125},
      archivePrefix={arXiv},
      primaryClass={cs.LG},
      url={https://arxiv.org/abs/2602.12125}, 
}

@misc{he2026sdzero,
      title={Self-Distillation Zero: Self-Revision Turns Binary Rewards into Dense Supervision}, 
      author={Yinghui He and Simran Kaur and Adithya Bhaskar and Yongjin Yang and Jiarui Liu and Narutatsu Ri and Liam Fowl and Abhishek Panigrahi and Danqi Chen and Sanjeev Arora},
      year={2026},
      eprint={2604.12002},
      archivePrefix={arXiv},
      primaryClass={cs.CL},
      url={https://arxiv.org/abs/2604.12002}, 
}

@misc{coreteam2026mimov2,
      title={MiMo-V2-Flash Technical Report}, 
      author={Core Team and Bangjun Xiao and Bingquan Xia and Bo Yang and Bofei Gao and Bowen Shen and Chen Zhang and Chenhong He and Chiheng Lou and Fuli Luo and Gang Wang and Gang Xie and Hailin Zhang and Hanglong Lv and Hanyu Li and Heyu Chen and Hongshen Xu and Houbin Zhang and Huaqiu Liu and Jiangshan Duo and Jianyu Wei and Jiebao Xiao and Jinhao Dong and Jun Shi and Junhao Hu and Kainan Bao and Kang Zhou and Lei Li and Liang Zhao and Linghao Zhang and Peidian Li and Qianli Chen and Shaohui Liu and Shihua Yu and Shijie Cao and Shimao Chen and Shouqiu Yu and Shuo Liu and Tianling Zhou and Weijiang Su and Weikun Wang and Wenhan Ma and Xiangwei Deng and Bohan Mao and Bowen Ye and Can Cai and Chenghua Wang and Chengxuan Zhu and Chong Ma and Chun Chen and Chunan Li and Dawei Zhu and Deshan Xiao and Dong Zhang and Duo Zhang and Fangyue Liu and Feiyu Yang and Fengyuan Shi and Guoan Wang and Hao Tian and Hao Wu and Heng Qu and Hongfei Yi and Hongxu An and Hongyi Guan and Xing Zhang and Yifan Song and Yihan Yan and Yihao Zhao and Yingchun Lai and Yizhao Gao and Yu Cheng and Yuanyuan Tian and Yudong Wang and Zhen Tang and Zhengju Tang and Zhengtao Wen and Zhichao Song and Zhixian Zheng and Zihan Jiang and Jian Wen and Jiarui Sun and Jiawei Li and Jinlong Xue and Jun Xia and Kai Fang and Menghang Zhu and Nuo Chen and Qian Tu and Qihao Zhang and Qiying Wang and Rang Li and Rui Ma and Shaolei Zhang and Shengfan Wang and Shicheng Li and Shuhao Gu and Shuhuai Ren and Sirui Deng and Tao Guo and Tianyang Lu and Weiji Zhuang and Weikang Zhang and Weimin Xiong and Wenshan Huang and Wenyu Yang and Xin Zhang and Xing Yong and Xu Wang and Xueyang Xie and Yilin Jiang and Yixin Yang and Yongzhe He and Yu Tu and Yuanliang Dong and Yuchen Liu and Yue Ma and Yue Yu and Yuxing Xiang and Zhaojun Huang and Zhenru Lin and Zhipeng Xu and Zhiyang Chen and Zhonghua Deng and Zihan Zhang and Zihao Yue},
      year={2026},
      eprint={2601.02780},
      archivePrefix={arXiv},
      primaryClass={cs.CL},
      url={https://arxiv.org/abs/2601.02780}, 
}

@misc{agarwal2024gkd,
      title={On-Policy Distillation of Language Models: Learning from Self-Generated Mistakes}, 
      author={Rishabh Agarwal and Nino Vieillard and Yongchao Zhou and Piotr Stanczyk and Sabela Ramos and Matthieu Geist and Olivier Bachem},
      year={2024},
      eprint={2306.13649},
      archivePrefix={arXiv},
      primaryClass={cs.LG},
      url={https://arxiv.org/abs/2306.13649}, 
}

@misc{gu2026minillm,
      title={MiniLLM: On-Policy Distillation of Large Language Models}, 
      author={Yuxian Gu and Li Dong and Furu Wei and Minlie Huang},
      year={2026},
      eprint={2306.08543},
      archivePrefix={arXiv},
      primaryClass={cs.CL},
      url={https://arxiv.org/abs/2306.08543}, 
}

@misc{wen2023fdivergence,
      title={f-Divergence Minimization for Sequence-Level Knowledge Distillation}, 
      author={Yuqiao Wen and Zichao Li and Wenyu Du and Lili Mou},
      year={2023},
      eprint={2307.15190},
      archivePrefix={arXiv},
      primaryClass={cs.CL},
      url={https://arxiv.org/abs/2307.15190}, 
}

@article{yu2025dapo,
  title={Dapo: An open-source llm reinforcement learning system at scale},
  author={Yu, Qiying and Zhang, Zheng and Zhu, Ruofei and Yuan, Yufeng and Zuo, Xiaochen and Yue, Yu and Dai, Weinan and Fan, Tiantian and Liu, Gaohong and Liu, Lingjun and others},
  journal={arXiv preprint arXiv:2503.14476},
  year={2025}
}

@misc{zhang2026stepopsd,
      title={StepOPSD: Step-Aware Online Preference Distillation for Agent Reinforcement Learning},
      author={Yanfei Zhang and Xu Lin and Chenglin Wu},
      year={2026},
      eprint={2605.27140},
      archivePrefix={arXiv},
      primaryClass={cs.LG},
      url={https://arxiv.org/abs/2605.27140},
}

@misc{schulman2017ppo,
      title={Proximal Policy Optimization Algorithms}, 
      author={John Schulman and Filip Wolski and Prafulla Dhariwal and Alec Radford and Oleg Klimov},
      year={2017},
      eprint={1707.06347},
      archivePrefix={arXiv},
      primaryClass={cs.LG},
      url={https://arxiv.org/abs/1707.06347}, 
}

@misc{schulman2016gae,
      title={High-Dimensional Continuous Control Using Generalized Advantage Estimation}, 
      author={John Schulman and Philipp Moritz and Sergey Levine and Michael Jordan and Pieter Abbeel},
      year={2016},
      eprint={1506.02438},
      archivePrefix={arXiv},
      primaryClass={cs.LG},
      url={https://arxiv.org/abs/1506.02438}, 
}

@misc{arjona2019rudder,
      title={RUDDER: Return Decomposition for Delayed Rewards}, 
      author={Jose A. Arjona-Medina and Michael Gillhofer and Michael Widrich and Thomas Unterthiner and Johannes Brandstetter and Sepp Hochreiter},
      year={2019},
      eprint={1806.07857},
      archivePrefix={arXiv},
      primaryClass={cs.LG},
      url={https://arxiv.org/abs/1806.07857}, 
}

@misc{kazemnejad2024vineppo,
      title={VinePPO: Refining Credit Assignment in RL Training of LLMs}, 
      author={Amirhossein Kazemnejad and Milad Aghajohari and Eva Portelance and Alessandro Sordoni and Siva Reddy and Aaron Courville and Nicolas Le Roux},
      year={2024},
      eprint={2410.01679},
      archivePrefix={arXiv},
      primaryClass={cs.LG},
      url={https://arxiv.org/abs/2410.01679}, 
}

@misc{cui2025prime,
      title={Process Reinforcement through Implicit Rewards}, 
      author={Ganqu Cui and Lifan Yuan and Zefan Wang and Hanbin Wang and Yuchen Zhang and Jiacheng Chen and Wendi Li and Bingxiang He and Yuchen Fan and Tianyu Yu and Qixin Xu and Weize Chen and Jiarui Yuan and Huayu Chen and Kaiyan Zhang and Xingtai Lv and Shuo Wang and Yuan Yao and Xu Han and Hao Peng and Yu Cheng and Zhiyuan Liu and Maosong Sun and Bowen Zhou and Ning Ding},
      year={2025},
      eprint={2502.01456},
      archivePrefix={arXiv},
      primaryClass={cs.LG},
      url={https://arxiv.org/abs/2502.01456}, 
}

@article{kass1995bayesfactors,
  title={Bayes Factors},
  author={Kass, Robert E. and Raftery, Adrian E.},
  journal={Journal of the American Statistical Association},
  volume={90},
  number={430},
  pages={773--795},
  year={1995},
  doi={10.1080/01621459.1995.10476572}
}

@article{wald1945sequential,
  title={Sequential Tests of Statistical Hypotheses},
  author={Wald, Abraham},
  journal={The Annals of Mathematical Statistics},
  volume={16},
  number={2},
  pages={117--186},
  year={1945},
  doi={10.1214/aoms/1177731118}
}

@article{lu2026sdar,
  title={Self-distilled agentic reinforcement learning},
  author={Lu, Zhengxi and Yao, Zhiyuan and Han, Zhuowen and Wang, Zi-Han and Wu, Jinyang and Gu, Qi and Cai, Xunliang and Lu, Weiming and Xiao, Jun and Zhuang, Yueting and others},
  journal={arXiv preprint arXiv:2605.15155},
  year={2026}
}

@article{kaelbling1998planning,
  title   = {Planning and Acting in Partially Observable Stochastic Domains},
  author  = {Kaelbling, Leslie Pack and Littman, Michael L. and Cassandra, Anthony R.},
  journal = {Artificial Intelligence},
  volume  = {101},
  number  = {1--2},
  pages   = {99--134},
  year    = {1998},
  doi     = {10.1016/S0004-3702(98)00023-X}
}

@article{astrom1965optimal,
  title   = {Optimal Control of Markov Processes with Incomplete State Information I},
  author  = {{\AA}str{\"o}m, Karl Johan},
  journal = {Journal of Mathematical Analysis and Applications},
  volume  = {10},
  pages   = {174--205},
  year    = {1965},
  doi     = {10.1016/0022-247X(65)90154-X}
}

@article{chen2026learning,
  title={Learning to self-verify makes language models better reasoners},
  author={Chen, Yuxin and Wang, Yu and Zhang, Yi and Ye, Ziang and Cai, Zhengzhou and Shi, Yaorui and Gu, Qi and Su, Hui and Cai, Xunliang and Wang, Xiang and others},
  journal={arXiv preprint arXiv:2602.07594},
  year={2026}
}

@article{ye2026look,
  title={Look Before You Leap: Autonomous Exploration for LLM Agents},
  author={Ye, Ziang and Shi, Wentao and Liu, Yuxin and Wang, Yu and Cai, Zhengzhou and Shi, Yaorui and Gu, Qi and Cai, Xunliang and Feng, Fuli},
  journal={arXiv preprint arXiv:2605.16143},
  year={2026}
}

@article{chen2026understanding,
  title={Understanding Multilingualism in Mixture-of-Experts LLMs: Routing Mechanism, Expert Specialization, and Layerwise Steering},
  author={Chen, Yuxin and Cai, Zhengzhou and Ji, Xiangtian and Zhao, Weixiang and Zhang, An and Wang, Xiang and Chua, Tat-Seng},
  journal={arXiv preprint arXiv:2601.14050},
  year={2026}
}

@article{ji2026tiny,
  title={Tiny Brains, Giant Impact: Uncovering the Keystone Neurons of LLM with Just a Few Prompts},
  author={Ji, Xiangtian and Chen, Yuxin and Cai, Zhengzhou and Wang, Xiang and Zhang, An and Chua, Tat-Seng},
  journal={arXiv preprint arXiv:2605.24846},
  year={2026}
}

@article{zhou2025memento,
  title={Memento: Fine-tuning llm agents without fine-tuning llms},
  author={Zhou, Huichi and Chen, Yihang and Guo, Siyuan and Yan, Xue and Lee, Kin Hei and Wang, Zihan and Lee, Ka Yiu and Zhang, Guchun and Shao, Kun and Yang, Linyi and others},
  journal={arXiv preprint arXiv:2508.16153},
  year={2025}
}

@article{xu2024reducing,
  title={Reducing tool hallucination via reliability alignment},
  author={Xu, Hongshen and Zhu, Zichen and Pan, Lei and Wang, Zihan and Zhu, Su and Ma, Da and Cao, Ruisheng and Chen, Lu and Yu, Kai},
  journal={arXiv preprint arXiv:2412.04141},
  year={2024}
}

@inproceedings{xu2025alignment,
  title={Alignment for efficient tool calling of large language models},
  author={Xu, Hongshen and Wang, Zihan and Zhu, Zichen and Pan, Lei and Chen, Xingyu and Fan, Shuai and Chen, Lu and Yu, Kai},
  booktitle={Proceedings of the 2025 Conference on Empirical Methods in Natural Language Processing},
  pages={17787--17803},
  year={2025}
}

@article{lu2025uis1,
  title={Ui-s1: Advancing gui automation via semi-online reinforcement learning},
  author={Lu, Zhengxi and Ye, Jiabo and Tang, Fei and Shen, Yongliang and Xu, Haiyang and Zheng, Ziwei and Lu, Weiming and Yan, Ming and Huang, Fei and Xiao, Jun and others},
  journal={arXiv preprint arXiv:2509.11543},
  year={2025}
}
